\renewcommand\thesubfigure{\thefigure\alph{subfigure}}
\newtheorem{corollary}{Corollary}
\DeclareMathOperator*{\argmin}{argmin}
\DeclareMathOperator*{\argmax}{argmax}
\renewcommand{\arraystretch}{1.8}
\numberwithin{equation}{section}
\newcolumntype{[}{@{\vrule width 1.5pt\hspace{6pt}}} 
\newcolumntype{]}{@{\hspace{6pt}\vrule width 1.5pt}} 
\newcolumntype{?}{@{\hskip\tabcolsep\vrule width 1.5pt\hskip\tabcolsep}}
\newcommand{\thickhline}{
    \noalign {\ifnum 0=`}\fi \hrule height 1.5pt
    \futurelet \reserved@a \@xhline
}
\newenvironment{pseudocode}[1][htb]
  {
   \begin{algorithm}[#1]%
  }{\end{algorithm}}
\newenvironment{pseudocode*}[1][htb]
  {
   \begin{algorithm*}[#1]%
  }{\end{algorithm*}}
\newcommand*{\MaxNumber}{30}
\newcommand*{\MaxNumberr}{1000}
\newcommand{\Maxx}[2]{
	\ifnum #1>#2
    	#1
    \else
    	#2
    \fi
}
\newcommand{\Minn}[2]{
	\ifnum #1>#2
    	#2
    \else
    	#1
    \fi
}
\newcommand{\ApplyGradient}[1]{
	\IfInteger{#1}{
		\ifnum #1<0
			\pgfmathsetmacro{\Percent}{(\Minn{-#1}{\MaxNumber}/\MaxNumber)}
			\definecolor{Mycolor}{rgb}{0, \Percent1, 0}
    	\else
			\pgfmathsetmacro{\Percent}{(\Minn{#1}{\MaxNumber}/\MaxNumber)}
			\definecolor{Mycolor}{rgb}{\Percent1, 0, 0}
    	\fi
	    \textcolor{Mycolor}{#1}
   }{
   	#1
   }
}
\newcommand{\ApplyGradientt}[1]{
	\IfInteger{#1}{
		\ifnum #1<0
			\pgfmathsetmacro{\Percentt}{(\Minn{-#1}{\MaxNumberr}/\MaxNumberr)}
			\definecolor{Mycolorr}{rgb}{0, 0, \Percentt}
    	\else
			\definecolor{Mycolorr}{rgb}{0, 0, 0}
    	\fi
	    \textcolor{Mycolorr}{#1}
   }{
   	#1
   }
}
\newcolumntype{R}{>{\collectcell\ApplyGradient}{r}<{\endcollectcell}}
\newcolumntype{Z}{>{\collectcell\ApplyGradientt}{r}<{\endcollectcell}}
\title{Reinforcement Learning Algorithm Selection}
\author{Romain Laroche\textsuperscript{1} and Rapha\"el F\'eraud\textsuperscript{2} \\
\textsuperscript{1} Microsoft Research Maluuba, Montr\'{e}al, Canada\\
\textsuperscript{2} Orange Labs, Lannion, France
\vspace{-10pt}
}
\begin{document}

\makeatletter
\newcommand*\bigcdot{\mathpalette\bigcdot@{.5}}
\newcommand*\bigcdot@[2]{\mathbin{\vcenter{\hbox{\scalebox{#2}{$\m@th#1\bullet$}}}}}
\makeatother

\maketitle

\begin{abstract}
This paper formalises the problem of online algorithm selection in the context of Reinforcement Learning. The setup is as follows: given an episodic task and a finite number of off-policy RL algorithms, a meta-algorithm has to decide which RL algorithm is in control during the next episode so as to maximize the expected return. The article presents a novel meta-algorithm, called Epochal Stochastic Bandit Algorithm Selection (ESBAS). Its principle is to freeze the policy updates at each epoch, and to leave a rebooted stochastic bandit in charge of the algorithm selection. Under some assumptions, a thorough theoretical analysis demonstrates its near-optimality considering the structural sampling budget limitations. ESBAS is first empirically evaluated on a dialogue task where it is shown to outperform each individual algorithm in most configurations. ESBAS is then adapted to a true online setting where algorithms update their policies after each transition, which we call SSBAS. SSBAS is evaluated on a fruit collection task where it is shown to adapt the stepsize parameter more efficiently than the classical hyperbolic decay, and on an Atari game, where it improves the performance by a wide margin.
\end{abstract}





\section{Introduction}
\label{sec:intro}
Reinforcement Learning (RL, \cite{Sutton1998}) is a machine learning framework for optimising the behaviour of an agent interacting with an unknown environment. For the most practical problems, such as dialogue or robotics, trajectory collection is costly and sample efficiency is the main key performance indicator. Consequently, when applying RL to a new problem, one must carefully choose in advance a model, a representation, an optimisation technique and their parameters. Facing the complexity of choice, RL and domain expertise is not sufficient. Confronted to the cost of data, the popular \emph{trial and error} approach shows its limits. 

We develop an \emph{online} learning version~\citep{Gagliolo2006,Gagliolo2010} of Algorithm Selection (AS, ~\cite{Rice1976,SmithMiles2009,Kotthoff2012}). It consists in testing several algorithms on the task and in selecting the best one at a given time. For clarity, throughout the whole article, the algorithm selector is called a \textit{meta-algorithm}, and the set of algorithms available to the meta-algorithm is called a \emph{portfolio}. The meta-algorithm maximises an objective function such as the RL return. Beyond the sample efficiency objective, the online AS approach besides addresses four practical problems for online RL-based systems. First, it improves robustness: if an algorithm fails to terminate, or outputs to an aberrant policy, it will be dismissed and others will be selected instead. Second, convergence guarantees and empirical efficiency may be united by covering the empirically efficient algorithms with slower algorithms that have convergence guarantees. Third, it enables curriculum learning: shallow models control the policy in the early stages, while deep models discover the best solution in late stages. And four, it allows to define an objective function that is not an RL return.

A fair algorithm selection implies a fair budget allocation between the algorithms, so that they can be equitably evaluated and compared. In order to comply with this requirement, the reinforcement algorithms in the portfolio are assumed to be \emph{off-policy}, and are trained on every trajectory, regardless which algorithm controls it. Section \ref{sec:RLModel} provides a unifying view of RL algorithms, that allows information sharing between algorithms, whatever their state representations and optimisation techniques. It also formalises the problem of online selection of off-policy RL algorithms.

Next, Section \ref{sec:amabma} presents the Epochal Stochastic Bandit AS (ESBAS), a novel meta-algorithm addressing the online off-policy RL AS problem. Its principle is to divide the time-scale into epochs of exponential length inside which the algorithms are not allowed to update their policies. During each epoch, the algorithms have therefore a constant policy and a stochastic multi-armed bandit can be in charge of the AS with strong pseudo-regret theoretical guaranties. A thorough theoretical analysis provides for ESBAS upper bounds. Then, Section \ref{sec:dialogueexp} empirically evaluates ESBAS on a dialogue task where it is shown to outperform each individual algorithm in most configurations. 

Afterwards, in Section \ref{sec:nesbas}, ESBAS, which is initially designed for a growing batch RL setting, is adapted to a true online setting where algorithms update their policies after each transition, which we call SSBAS. It is evaluated on a fruit collection task where it is shown to adapt the stepsize parameter more efficiently than the classical hyperbolic decay, and on Q*bert, where running several DQN with different network size and depth in parallel allows to improve the final performance by a wide margin. Finally, Section \ref{sec:conclusion} concludes the paper with prospective ideas of improvement.

\begin{wrapfigure}{r}{0.5\textwidth}
\vspace{-70pt}
\tikzstyle{block} = [rectangle, draw, 
    text width=6em, text centered, rounded corners, minimum height=2em] 
\tikzstyle{line} = [draw, -latex]
\centering
\begin{tikzpicture}[node distance = 4em, auto, thick]
    \tikzset{every block/.style={minimum width=0.5cm,minimum height=0.5cm}}
    \node [block] (Agent) {Agent};
    \node [block, below of=Agent] (Environment) {Stochastic environment};
    
     \path [line] (Agent.0) --++ (2em,0em) |- node [near start]{$a(t)$} (Environment.0);
     \path [line] (Environment.185) --++ (-5em,0em) |- node [near start] { $o(t+1)$} (Agent.175);
     \path [line] (Environment.175) --++ (-4em,0em) |- node [near start, right] {$r(t+1)$} (Agent.185);
\end{tikzpicture}
\vspace{-10pt}
     \caption{RL framework: after performing action $a(t)$, the agent perceives observation $o(t+1)$ and receives reward $r(t+1)$.}
     \label{fig:RL}
\vspace{-20pt}
\end{wrapfigure}
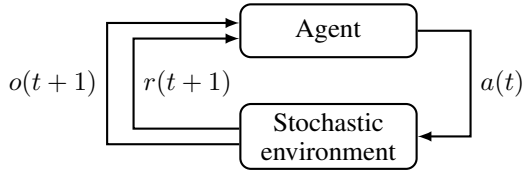

\section{Algorithm Selection for RL}
\label{sec:RLModel}
\subsection{Unifying view of RL algorithms}
The goal of this section is to enable information sharing between algorithms, even though they are considered as black boxes. We propose to share their trajectories expressed in a universal format: the \emph{interaction process}.

Reinforcement Learning (RL) consists in learning through trial and error to control an agent behaviour in a stochastic environment: at each time step $t\in\mathbb{N}$, the agent performs an action $a(t)\in\mathcal{A}$, and then perceives from its environment a signal $o(t) \in \Omega$ called observation, and receives a reward $r(t)\in\mathbb{R}$, bounded between $R_{min}$ and $R_{max}$. Figure \ref{fig:RL} illustrates the RL framework. This interaction process is not Markovian: the agent may have an internal memory.

In this article, the RL problem is assumed to be episodic. Let us introduce two time scales with different notations. First, let us define \emph{meta-time} as the time scale for AS: at one meta-time $\tau$ corresponds a meta-algorithm decision, \emph{i.e.} the choice of an algorithm and the generation of a full episode controlled with the policy determined by the chosen algorithm. Its realisation is called a \emph{trajectory}. Second, \emph{RL-time} is defined as the time scale inside a trajectory, at one RL-time $t$ corresponds one triplet composed of an observation, an action, and a reward.

Let $\mathscr{E}$ denote the space of trajectories. A \emph{trajectory} $\varepsilon_{\tau}\in\mathscr{E}$ collected at meta-time $\tau$ is formalised as a sequence of (observation, action, reward) triplets: $\varepsilon_{\tau}=\langle o_\tau(t),a_\tau(t),r_\tau(t)\rangle_{t\in\llbracket 1,|\varepsilon_\tau|\rrbracket}\in\mathscr{E}$, where $|\varepsilon_\tau|$ is the length of trajectory $\varepsilon_\tau$. The objective is, given a discount factor $0\leq\gamma<1$, to generate trajectories with high discounted cumulative reward, also called \emph{return}, and noted $\mu(\varepsilon_{\tau}) = \sum_{t=1}^{|\varepsilon_\tau|}{\gamma^{t-1}r_\tau(t)}$. Since $\gamma<1$ and $R$ is bounded, the return is also bounded. The \emph{trajectory set} at meta-time $T$ is denoted by $\mathcal{D}_T=\{\varepsilon_{\tau}\}_{\tau\in\llbracket 1,T\rrbracket}\in\mathscr{E}^T$. A sub-trajectory of $\varepsilon_\tau$ until RL-time $t$ is called the \emph{history} at RL-time $t$ and written $\varepsilon_\tau(t)$ with $t\leq |\varepsilon_\tau|$. The history records what happened in episode $\varepsilon_\tau$ until RL-time $t$: $\varepsilon_\tau(t) = \langle o_\tau(t'),a_\tau(t'),r_\tau(t')\rangle_{t'\in\llbracket 1,t\rrbracket}\in\mathscr{E}$.

The goal of each RL algorithm $\alpha$ is to find a policy $\pi^*: \mathscr{E}\rightarrow\mathcal{A}$ which yields optimal expected returns. Such an algorithm $\alpha$ is viewed as a black box that takes as an input a trajectory set $\mathcal{D}\in\mathscr{E}^+$, where $\mathscr{E}^+$ is the ensemble of trajectory sets of undetermined size: $\mathscr{E}^+=\bigcup_{T\in\mathbb{N}}\mathscr{E}^T$, and that outputs a policy $\pi_{\mathcal{D}}^\alpha$. Consequently, a RL algorithm is formalised as follows: $\alpha : \mathscr{E}^+ \rightarrow (\mathscr{E} \rightarrow \mathcal{A})$.

Such a high level definition of the RL algorithms allows to share trajectories between algorithms: a trajectory as a sequence of observations, actions, and rewards can be interpreted by any algorithm in its own decision process and state representation. For instance, RL algorithms classically rely on an MDP defined on a explicit or implicit state space representation $\mathcal{S}_{\mathcal{D}}^\alpha$ thanks to a projection $\Phi_{\mathcal{D}}^\alpha: \mathscr{E} \rightarrow \mathcal{S}_{\mathcal{D}}^\alpha$. Then, $\alpha$ trains its policy $\pi_{\mathcal{D}_T}^\alpha$ on the trajectories projected on its state space representation. Off-policy RL optimisation techniques compatible with this approach are numerous in the literature~\citep{Watkins1989,Ernst2005,Mnih2013}. As well, any post-treatment of the state set, any alternative decision process~\citep{Lovejoy1991}, and any off-policy algorithm may be used. The algorithms are defined here as black boxes and the considered meta-algorithms will be indifferent to how the algorithms compute their policies, granted they satisfy the off-policy assumption.

\begin{wrapfigure}{r}{0.5\textwidth} 
\vspace{-30pt}
\RestyleAlgo{boxed}
\RestyleAlgo{boxruled}
\begin{minipage}{0.5\textwidth}
\begin{pseudocode}[H]
 	\KwData{$\mathcal{D}_{0}\leftarrow\emptyset$: trajectory set}
	\KwData{$\mathcal{P} \leftarrow \{\alpha^k\}_{k\in\llbracket 1,K\rrbracket}$: algorithm portfolio}
 	\KwData{$\mu : \mathscr{E} \rightarrow \mathbb{R}$: the objective function}
	\For{$\tau\leftarrow 1$ \KwTo $\infty$}{
       	Select $\sigma\left(\mathcal{D}_{\tau-1}\right) = \sigma(\tau) \in \mathcal{P}$;
        
    	Generate trajectory $\varepsilon_\tau$ with policy $\pi_{\mathcal{D}_{\tau-1}}^{\sigma(\tau)}$;
        
    	Get return $\mu(\varepsilon_\tau)$;
        
    	$\mathcal{D}_{\tau}\leftarrow \mathcal{D}_{\tau-1}\cup\{\varepsilon_\tau\}$;
	}
 	\caption{Online RL AS setting}
	\label{alg:AlgorithmSelectionProblem}
\end{pseudocode}
\end{minipage}
\vspace{-10pt}
\end{wrapfigure} 

\subsection{Online algorithm selection}
\label{sec:online}
The online learning approach is tackled in this article: different algorithms are experienced and evaluated during the data collection. Since it boils down to a classical exploration/exploitation trade-off, multi-armed bandit~\citep{Bubeck2012} have been used for combinatorial search AS~\citep{Gagliolo2006,Gagliolo2010} and evolutionary algorithm meta-learning~\citep{Fialho2010}. The online AS problem for off-policy RL is novel and we define it as follows:
\begin{itemize}
\item $\mathcal{D}\in\mathscr{E}^+$ is the current \emph{trajectory set};
\item $\mathcal{P}=\{\alpha^k\}_{k\in\llbracket 1,K\rrbracket}$ is the \emph{portfolio} of off-policy RL algorithms;
\item $\mu : \mathscr{E} \rightarrow \mathbb{R}$ is the \emph{objective function}, generally set as the RL return.
\end{itemize}

Pseudo-code \ref{alg:AlgorithmSelectionProblem} formalises the online RL AS setting. A \emph{meta-algorithm} is defined as a function from a trajectory set to the selection of an algorithm: $\sigma: \mathscr{E}^+ \rightarrow \mathcal{P}$. The meta-algorithm is queried at each meta-time $\tau=|\mathcal{D}_{\tau-1}|+1$, with input $\mathcal{D}_{\tau-1}$, and it ouputs algorithm $\sigma\left(\mathcal{D}_{\tau-1}\right) = \sigma(\tau) \in \mathcal{P}$ controlling with its policy $\pi^{\sigma(\tau)}_{\mathcal{D}_{\tau-1}}$ the generation of the trajectory $\varepsilon_{\tau}$ in the stochastic environment. The final goal is to optimise the cumulative expected return. It is the expectation of the sum of rewards obtained after a run of $T$ trajectories:
\begin{dmath}
\mathbb{E}_\sigma\left[\sum_{\tau=1}^T\mu\left(\varepsilon_{\tau}\right) \right]
= \mathbb{E}_\sigma\left[\sum_{\tau=1}^T\mathbb{E}\mu_{\mathcal{D}_{\tau-1}^\sigma}^{\sigma(\tau)} \right], \label{eq:cumulativeExpectedReturn}
\end{dmath}
with $\mathbb{E}\mu_{\mathcal{D}}^{\alpha} = \mathbb{E}_{\pi_{\mathcal{D}}^{\alpha}}\left[\mu\left(\varepsilon\right) \right]$ as a condensed notation for the expected return of policy $\pi_{\mathcal{D}}^{\alpha}$, trained on trajectory set $\mathcal{D}$ by algorithm $\alpha$. Equation \ref{eq:cumulativeExpectedReturn} transforms the cumulative expected return into two nested expectations. The outside expectation $\mathbb{E}_\sigma$ assumes the meta-algorithm $\sigma$ fixed and averages over the trajectory set generation and the corresponding algorithms policies. The inside expectation $\mathbb{E}\mu$ assumes the policy fixed and averages over its possible trajectories in the stochastic environment. \emph{Nota bene}: there are three levels of decision: meta-algorithm $\sigma$ selects algorithm $\alpha$ that computes policy $\pi$ that is in control. In this paper, the focus is at the meta-algorithm level.

\subsection{Meta-algorithm evaluation}
\label{sec:metaeval}
In order to evaluate the meta-algorithms, let us formulate two additional notations. First, the \emph{optimal expected return} $\mathbb{E}\mu^*_\infty$ is defined as the highest expected return achievable by a policy of an algorithm in portfolio $\mathcal{P}$. Second, for every algorithm $\alpha$ in the portfolio, let us define $\sigma^\alpha$ as its \emph{canonical meta-algorithm}, \emph{i.e.} the meta-algorithm that always selects algorithm $\alpha$: $\forall \tau$, $\sigma^\alpha(\tau) = \alpha$. The \emph{absolute pseudo-regret} $\overline{\rho}^\sigma_{abs}(T)$ defines the regret as the loss for not having controlled the trajectory with an optimal policy:

\begin{equation}
\overline{\rho}^\sigma_{abs}(T) = T\mathbb{E}\mu^*_\infty -  \mathbb{E}_\sigma\left[\sum_{\tau=1}^T\mathbb{E}\mu_{\mathcal{D}_{\tau-1}^\sigma}^{\sigma(\tau)} \right]. \label{eq:absoluteregret}
\end{equation}

It is worth noting that an optimal meta-algorithm will unlikely yield a null regret because a large part of the absolute pseudo-regret is caused by the sub-optimality of the algorithm policies when the trajectory set is still of limited size. Indeed, the absolute pseudo-regret considers the regret for not selecting an optimal policy: it takes into account both the pseudo-regret of not selecting the best algorithm and the pseudo-regret of the algorithms for not finding an optimal policy. Since the meta-algorithm does not interfere with the training of policies, it ought not account for the pseudo-regret related to the latter.

\subsection{Related work}
\label{sec:related}
Related to AS for RL, \cite{Schweighofer2003} use meta-learning to tune a fixed RL algorithm in order to fit observed animal behaviour, which is a very different problem to ours. In \cite{Cauwet2014,Liu2014}, the RL AS problem is solved with a portfolio composed of online RL algorithms. The main limitation from these works relies on the fact that \emph{on-policy} algorithms were used, which prevents them from sharing trajectories among algorithms~\citep{Cauwet2015}. Meta-learning specifically for the eligibility trace parameter has also been studied in \cite{MWhite2016}. \cite{Wang2016} study the learning process of RL algorithms and selects the best one for learning faster on a new task. This work is related to batch AS.

An intuitive way to solve the AS problem is to consider algorithms as arms in a multi-armed bandit setting. The bandit meta-algorithm selects the algorithm controlling the next trajectory $\varepsilon$ and the objective function $\mu(\varepsilon)$ constitutes the reward of the bandit. The aim of prediction with expert advice is to minimise the regret against the best expert of a set of predefined experts. When the experts learn during time, their performances evolve and hence the sequence of expert rewards is non-stationary. The exponential weight algorithms~\citep{Auer2002adv,CesaBianchi2006} are designed for prediction with expert advice when the sequence of rewards of experts is generated by an oblivious adversary. This approach has been extended for competing against the best sequence of experts by adding in the update of weights a forgetting factor proportional to the mean reward (see Exp3.S in \cite{Auer2002adv}), or by combining Exp3 with a concept drift detector \cite{Allesiardo2015}. The exponential weight algorithms have been extended to the case where the rewards are generated by any sequence of stochastic processes of unknown means~\citep{Besbes2014}. The stochastic bandit algorithm such as UCB can be extended to the case of switching bandits using a discount factor or a window to forget the past~\cite{Garivier2011}. This class of switching bandit algorithms are not designed for experts that learn and hence evolve at each time step.

\begin{wrapfigure}{r}{0.52\textwidth}
    \begin{minipage}{0.52\textwidth}
    \vspace{-52pt}
\newcommand{\atcp}[1]{\tcp*[r]{\makebox[\commentWidth]{#1\hfill}}}
\RestyleAlgo{boxed}
\RestyleAlgo{boxruled}
    	\begin{pseudocode}[H]
            \KwData{$\mathcal{D}_{0}, \mathcal{P}, \mu$: the online RL AS setting}
	 		\For{$\beta\leftarrow 0$ \KwTo $\infty$}{
  				\For{$\alpha^k \in \mathcal{P}$}{
        			$\pi_{\mathcal{D}_{2^\beta-1}}^k$: policy learnt by $\alpha^k$ on $\mathcal{D}_{2^\beta-1}$
				}
    			$n\leftarrow 0$,	$\forall \alpha^k\in\mathcal{P}$, $n^k\leftarrow 0$, and $x^k\leftarrow 0$\\
        		\For{$\tau\leftarrow 2^\beta$ \KwTo $2^{\beta+1}-1$}{ 
        			$\alpha^{k_{max}} = \displaystyle\argmax_{\alpha^k\in\mathcal{P}}\left(x^k+\sqrt[]{\xi\cfrac{\log(n)}{n^k}}\right)$ \\
        		    Generate trajectory $\varepsilon_\tau$ with policy $\pi_{\mathcal{D}_{2^\beta-1}}^{k_{max}}$\\
        		    Get return $\mu(\varepsilon_\tau)$, $\mathcal{D}_{\tau}\leftarrow \mathcal{D}_{\tau-1}\cup\{\varepsilon_\tau\}$\\
        		    $x^{k_{max}}\leftarrow \cfrac{n^{k_{max}}x^{k_{max}}+\mu(\varepsilon_\tau)}{n^{k_{max}}+1}$\\
        		    $n^{k_{max}}\leftarrow n^{k_{max}}+1$\ and $n\leftarrow n+1$
        		}
			}
 			\caption{ESBAS with UCB1}
			\label{alg:EpochAlgorithmSelectionAlgorithm}
		\end{pseudocode}
    \end{minipage}
    \vspace{-15pt}
\end{wrapfigure}

\section{Epochal Stochastic Bandit}
\label{sec:amabma}       
\paragraph{ESBAS description --}
\label{sec:esbasdesc}
To solve the off-policy RL AS problem, we propose a novel meta-algorithm called Epochal Stochastic Bandit AS (ESBAS). Because of the non-stationarity induced by the algorithm learning, the stochastic bandit cannot directly select algorithms. Instead, the stochastic bandit can choose fixed policies. To comply with this constraint, the meta-time scale is divided into epochs inside which the algorithms policies cannot be updated: the algorithms optimise their policies only when epochs start, in such a way that the policies are constant inside each epoch. As a consequence and since the returns are bounded, at each new epoch, the problem can rigorously be cast into an independent stochastic $K$-armed bandit $\Xi$, with $K=|\mathcal{P}|$. 

The ESBAS meta-algorithm is formally sketched in Pseudo-code \ref{alg:EpochAlgorithmSelectionAlgorithm} embedding UCB1~\cite{Auer2002} as the stochastic $K$-armed bandit $\Xi$. The meta-algorithm takes as an input the set of algorithms in the portfolio. Meta-time scale is fragmented into epochs of exponential size. The $\beta$\textsuperscript{th} epoch lasts $2^{\beta}$ meta-time steps, so that, at meta-time $\tau=2^\beta$, epoch $\beta$ starts. At the beginning of each epoch, the ESBAS meta-algorithm asks each algorithm in the portfolio to update their current policy. Inside an epoch, the policy is never updated anymore. At the beginning of each epoch, a new $\Xi$ instance is reset and run. During the whole epoch, $\Xi$ selects at each meta-time step the algorithm in control of the next trajectory.

\paragraph{Theoretical analysis --}
\label{sec:esbastheo}
ESBAS intends to minimise the regret for not choosing the algorithm yielding the maximal return at a given meta-time $\tau$. It is short-sighted: it does not intend to optimise the algorithms learning. We define the short-sighted pseudo-regret as follows:
\begin{equation}
\overline{\rho}^\sigma_{ss}(T) = \mathbb{E}_\sigma\left[\sum_{\tau=1}^T\left(\max_{\alpha\in\mathcal{P}} \mathbb{E}\mu_{\mathcal{D}_{\tau-1}^\sigma}^{\alpha} -\mathbb{E}\mu_{\mathcal{D}_{\tau-1}^\sigma}^{\sigma(\tau)}\right) \right]. 
\end{equation}

The short-sighted pseudo-regret depends on the \emph{gaps} $\Delta_\beta^\alpha$: the difference of expected return between the best algorithm during epoch $\beta$ and algorithm $\alpha$. The smallest non null gap at epoch $\beta$ is noted $\Delta_\beta^\dag$. We write its limit when $\beta$ tends to infinity with $\Delta_\infty^\dag$.

Based on several assumptions, 
three theorems show that ESBAS absolute pseudo-regret can be expressed in function of the absolute pseudo-regret of the best canonical algorithm and ESBAS short-sighted pseudo-regret. They also provide upper bounds on the ESBAS short-sighted pseudo-regret. The full theoretical analysis can be found in the supplementary material, Section \ref{sup:ESBASanalysis}. We provide here an intuitive overlook of its results. Table \ref{tab:absssbounds} numerically reports those bounds for a two-fold portfolio, depending on the nature of the algorithms. It must be read by line. According to the first column: the order of magnitude of $\Delta_\beta^\dag$, the ESBAS short-sighted pseudo-regret bounds are displayed in the second column, and the third and fourth columns display the ESBAS absolute pseudo-regret bounds also depending on the order of magnitude of $\overline{\rho}_{abs}^{\sigma^*}(T)$.

Regarding the short-sighted upper bounds, the main result appears in the last line, when the algorithms converge to policies with different performance: ESBAS logarithmically converges to the best algorithm with a regret in $\mathcal{O}\left(\log^2(T)/\Delta_\infty^\dag\right)$. Also, one should notice that the first two bounds are obtained by summing the gaps. This means that the algorithms are almost equally good and their gap goes beyond the threshold of distinguishability. This threshold is structurally at $\Delta_\beta^\dag\in \mathcal{O}(1/\sqrt[]{T})$. The impossibility to determine which is the better algorithm is interpreted in \cite{Cauwet2014} as a budget issue. The meta-time necessary to distinguish through evaluation arms that are $\Delta_\beta^\dag$ apart takes $\Theta(1/\Delta_\beta^{\dag2})$ meta-time steps. As a consequence, if $\Delta_\beta^\dag\in \mathcal{O}(1/\sqrt[]{T})$, then $1/\Delta_\beta^{\dag2} \in\Omega(T)$. However, the budget, \emph{i.e.} the length of epoch $\beta$ starting at meta-time $T=2^\beta$, equals $T$.

Additionally, the absolute upper bounds are logarithmic in the best case and still inferior to $\mathcal{O}(\sqrt[]{T})$ in the worst case, which compares favorably with those of discounted UCB and Exp3.S in $\mathcal{O}(\sqrt[]{T\log(T)})$ and Rexp3 in $\mathcal{O}(T^{2/3})$, or the RL with Policy Advice's regret bounds of $\mathcal{O}(\sqrt{T} \log(T))$.

\begin{table}[H]
\vspace{-15pt}
\caption{Bounds on $\overline{\rho}^{\sigma^{\textsc{esbas}}}_{ss}(T)$ and $\overline{\rho}^{\sigma^{\textsc{esbas}}}_{abs}(T)$ given various settings for a two-fold portfolio AS.}
\label{tab:absssbounds}
\bigcentering
\small
\begin{tabular}{[c?c?c|c]}
\Xhline{1.5pt}
   \multirow{2}{*}{$\Delta_\beta^\dag$} & \multirow{2}{*}{$\overline{\rho}^{\sigma^{\textsc{esbas}}}_{ss}(T)$} & \multicolumn{2}{c]}{$\overline{\rho}^{\sigma^{\textsc{esbas}}}_{abs}(T)$ in function of $\overline{\rho}_{abs}^{\sigma^*}(T)$} \\
   & & $\overline{\rho}_{abs}^{\sigma^*}(T)\in\mathcal{O}(\log(T))$ & $\overline{\rho}_{abs}^{\sigma^*}(T)\in\mathcal{O}\big(T^{1-c^*}\big)$ \\
\Xhline{1.5pt}
   $\Theta\left(1/T\right)$ & $\mathcal{O}\left(\log(T)\right)$ & $\mathcal{O}\left(\log(T)\right)$ & \cellcolor{black!30} \\
\hline
   \begin{tabular}[x]{@{}c@{}}$\Theta\big(T^{-c^\dag}\big)$, and $c^\dag\geq 0.5$\end{tabular} & $\mathcal{O}\big(T^{1-c^\dag}\big)$ & $\mathcal{O}\big(T^{1-c^\dag}\big)$ & $\mathcal{O}\big(T^{1-c^\dag}\big)$\\
\hline
   \begin{tabular}[x]{@{}c@{}}$\Theta\big(T^{-c^\dag}\big)$, and $c^\dag< 0.5$\end{tabular} & $\mathcal{O}\big(T^{c^\dag}\log(T)\big)$ & $\mathcal{O}\big(T^{c^\dag}\log(T)\big)$ &
   \begin{tabular}[x]{@{}c@{}}$\mathcal{O}\big(T^{1-c^*}\big)$, if $c^\dag < 1-c^*$\vspace{-0.1cm}\\$\mathcal{O}\big(T^{c^\dag}\log(T)\big)$, if $c^\dag \geq 1-c^*$\end{tabular}\\
\hline
   $\Theta\left(1\right)$ & $\mathcal{O}\left(\log^2(T)/\Delta_\infty^\dag\right)$ & $\mathcal{O}\left(\log^2(T)/\Delta_\infty^\dag\right)$ & $\mathcal{O}\big(T^{1-c^*}\big)$ \\
\Xhline{1.5pt}
\end{tabular}
\normalsize
\vspace{-5pt}
\end{table}

\section{ESBAS Dialogue Experiments}
\label{sec:dialogueexp}
ESBAS is particularly designed for RL tasks when it is impossible to update the policy after every transition or episode. Policy update is very costly in most real-world applications, such as dialogue systems~\citep{Khouzaimi2016} for which a growing batch setting is preferred~\citep{Lange2012}. ESBAS practical efficiency is therefore illustrated on a dialogue negotiation game~\citep{Laroche2016} that involves two players: the system $p_s$ and a user $p_u$. Their goal is to find an agreement among $4$ alternative options. At each dialogue, for each option $\eta$, players have a private uniformly drawn cost $\nu^{p}_\eta \thicksim \mathcal{U}[0,1]$ to agree on it. Each player is considered fully empathetic to the other one. The details of the experiment can be found in the supplementary material, Section \ref{sup:comchannel}.

All learning algorithms are using Fitted-$Q$ Iteration~\citep{Ernst2005}, with a linear parametrisation and an $\epsilon_\beta$-greedy exploration : $\epsilon_\beta = 0.6^\beta$, $\beta$ being the epoch number. Several algorithms differing by their state space representation $\Phi^\alpha$ are considered: \emph{simple}, \emph{fast}, \emph{simple-2}, \emph{fast-2}, \emph{n-$\zeta$-\{simple/fast/simple-2/fast-2\}}, and \emph{constant-$\mu$}. See Section \ref{sup:algos} for their full descriptions.

\begin{figure*}[t!]
  \centering
\subfloat[\emph{simple} vs \emph{simple-2}]{
  \includegraphics[width=0.31\columnwidth]{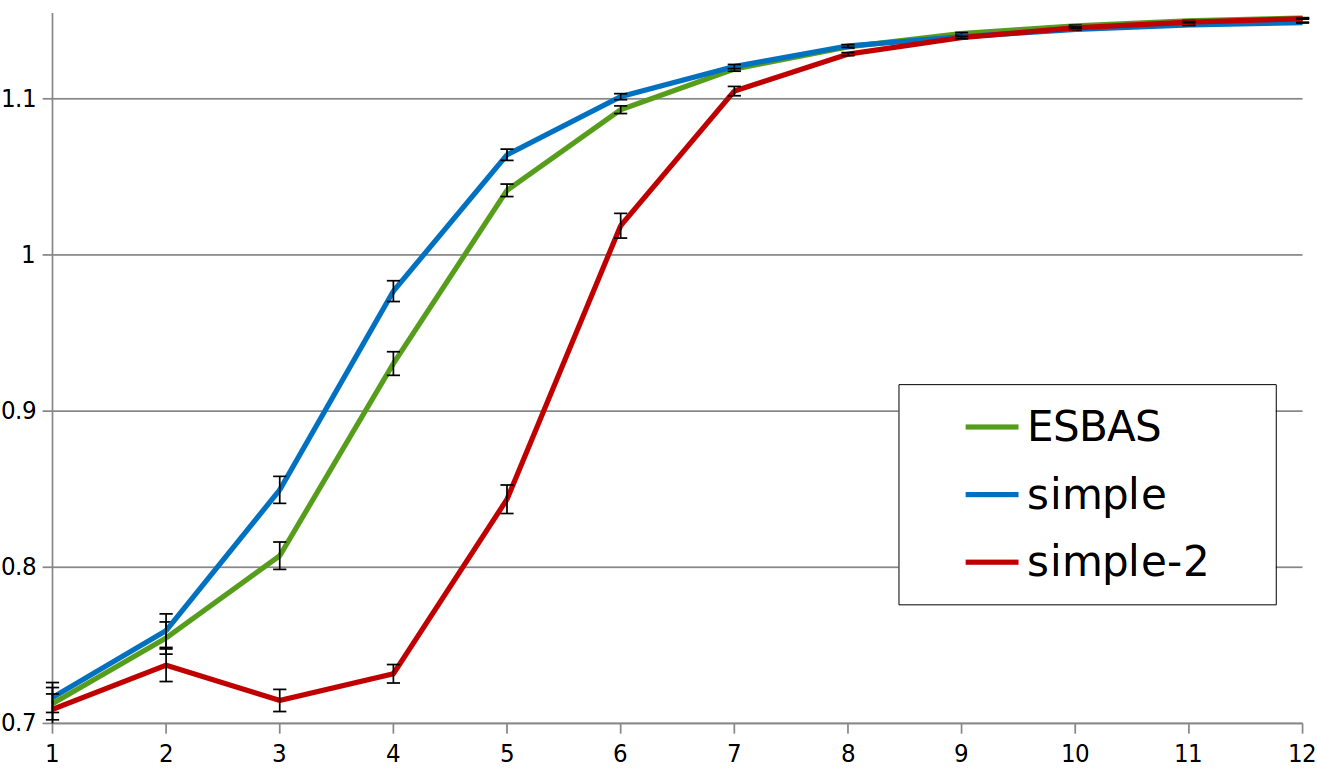}
  \label{fig:simpleVSSquare1}
}
\setcounter{figure}{1}
\setcounter{subfigure}{2}
\subfloat[\emph{simple-2} vs \emph{constant-1.009}]{
  \includegraphics[width=0.31\columnwidth]{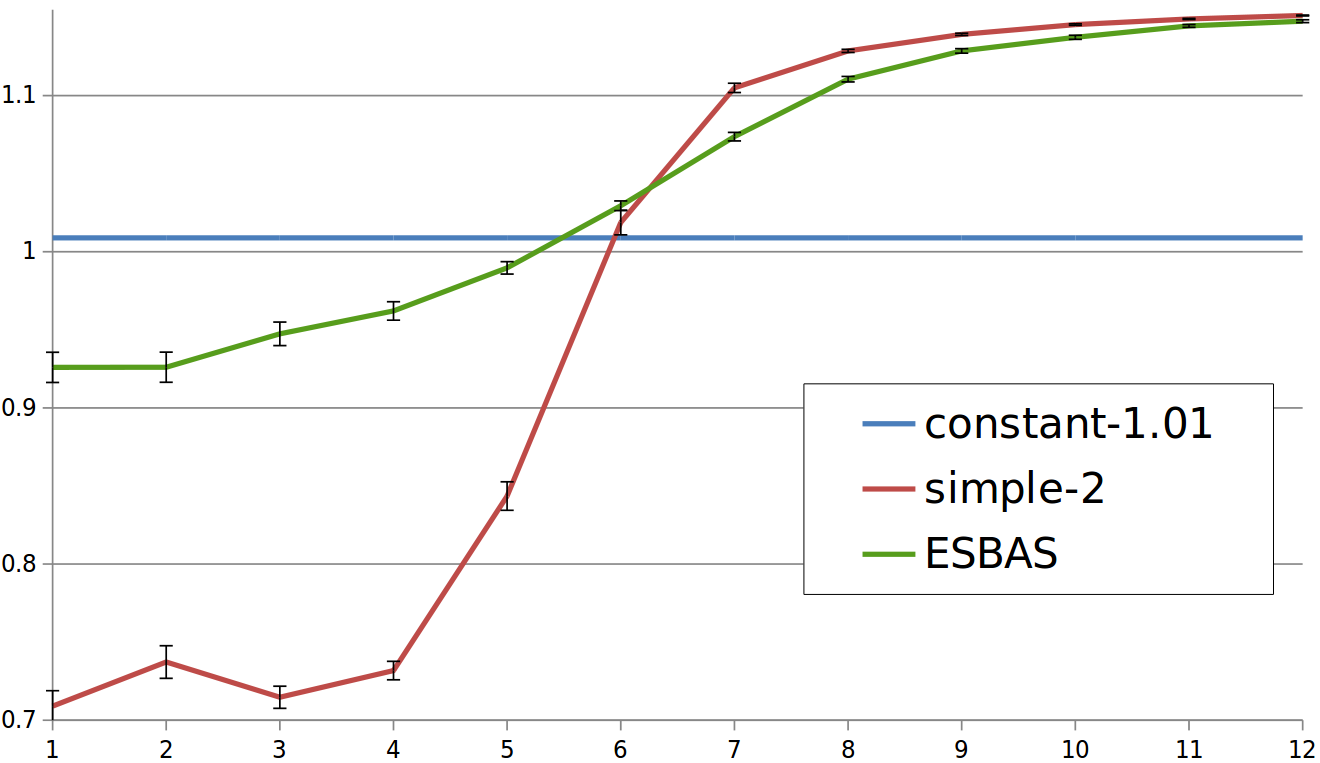}
  \label{fig:squareVSConstant1}
}
\setcounter{figure}{1}
\setcounter{subfigure}{4}
\subfloat[8 learners]{
  \includegraphics[width=0.31\columnwidth]{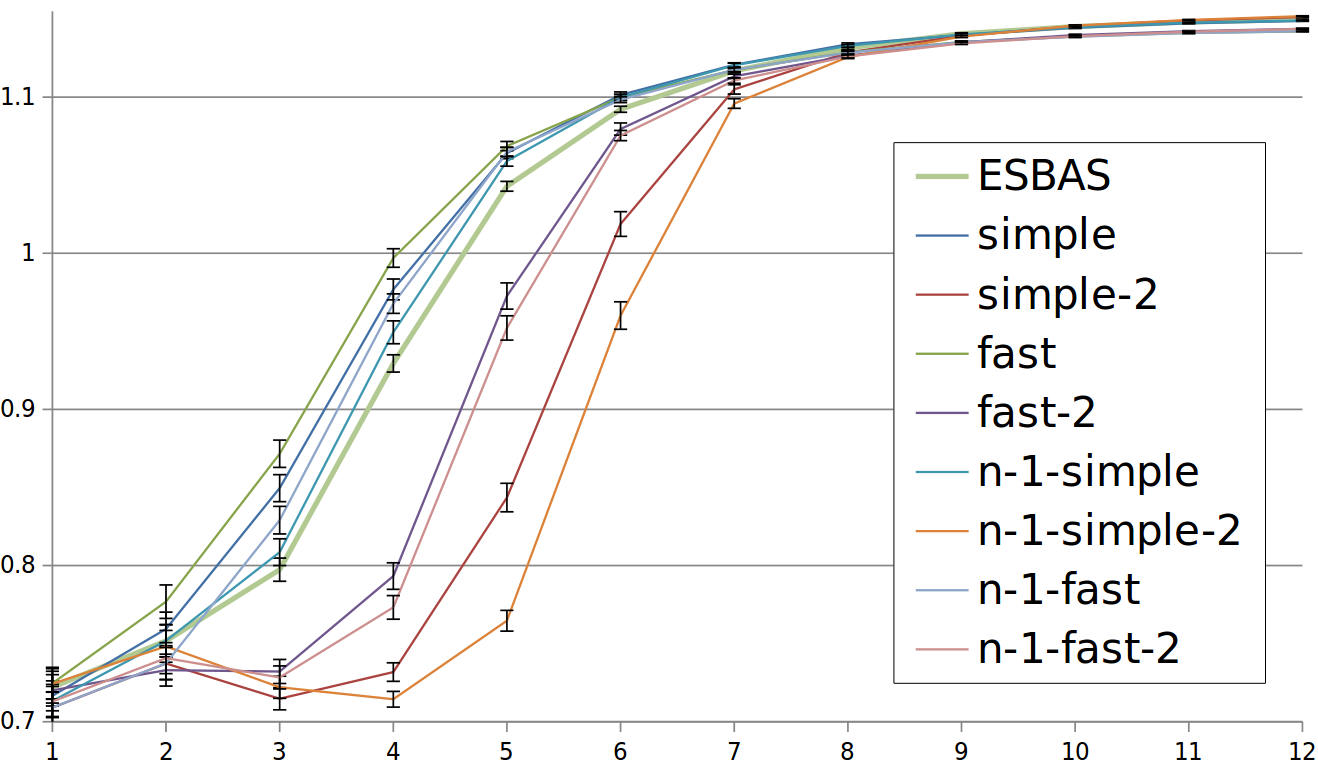}
  \label{fig:8Learners1}
}\\
\vspace{-10pt}
\setcounter{figure}{1}
\setcounter{subfigure}{1}
\subfloat[\emph{simple} vs \emph{simple-2}]{
  \includegraphics[width=0.31\columnwidth]{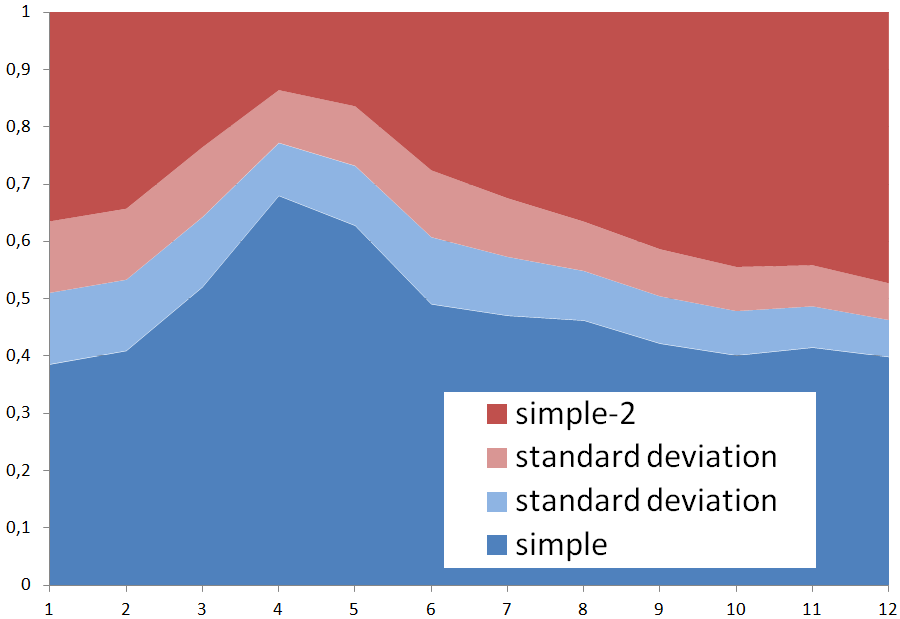}
  \label{fig:simpleVSSquare2}
}
\setcounter{figure}{1}
\setcounter{subfigure}{3}
\subfloat[\emph{simple-2} vs \emph{constant-1.009}]{
  \includegraphics[width=0.31\columnwidth]{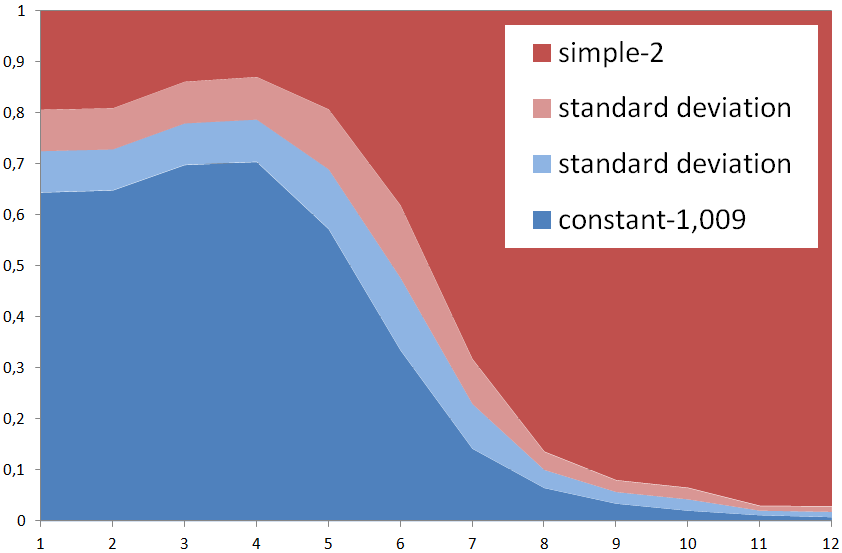}
  \label{fig:squareVSConstant2}
}
\setcounter{figure}{1}
\setcounter{subfigure}{5}
\subfloat[8 learners]{
  \includegraphics[width=0.31\columnwidth]{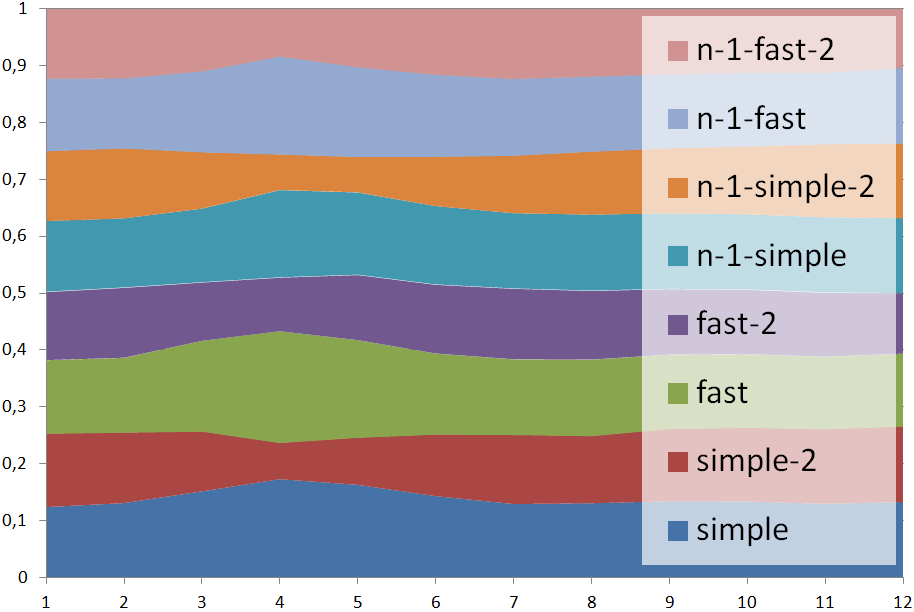}
  \label{fig:8Learners2}
}
  \label{fig:simpleVSSquare}
  \caption{The figures on the top plot the performance over time. The figures on the bottom show the ESBAS selection ratios over the epochs.}
\end{figure*}

The algorithms and ESBAS are playing with a stationary user simulator built through Imitation Learning from real-human data. All the results are averaged over 1000 runs. The performance figures plot the curves of algorithms individual performance $\sigma^\alpha$ against the ESBAS portfolio control $\sigma^{\textsc{esbas}}$ in function of the epoch (the scale is therefore logarithmic in meta-time). The performance is the average return of the RL problem. The ratio figures plot the average algorithm selection proportions of ESBAS at each epoch. We define the \emph{relative pseudo regret} as the difference between the ESBAS absolute pseudo-regret and the absolute pseudo-regret of the best canonical meta-algorithm. Relative pseudo-regrets have a 95\% confidence interval about $\pm 6 \approx \pm\num{1.5e-4}$ per trajectory. Extensive numerical results are provided in Table \ref{tab:recap} of the supplementary material.

Figures \ref{fig:simpleVSSquare1} and \ref{fig:simpleVSSquare2} plot the typical curves obtained with ESBAS selecting from a portfolio of two learning algorithms. On Figure \ref{fig:simpleVSSquare1}, the ESBAS curve tends to reach more or less the best algorithm in each point as expected. Surprisingly, Figure \ref{fig:simpleVSSquare2} reveals that the algorithm selection ratios are not very strong in favour of one or another at any time. Indeed, the variance in trajectory set collection makes \emph{simple} better on some runs until the end. ESBAS proves to be efficient at selecting the best algorithm for each run and unexpectedly obtains a negative relative pseudo-regret of -90. Figures \ref{fig:squareVSConstant1} and \ref{fig:squareVSConstant2} plot the typical curves obtained with ESBAS selecting from a portfolio constituted of a learning algorithm and an algorithm with a deterministic and stationary policy. ESBAS succeeds in remaining close to the best algorithm at each epoch and saves $5361$ return value for not selecting the constant algorithm, but overall yields a regret for not using only the best algorithm. ESBAS also performs well on larger portfolios of 8 learners (see Figure \ref{fig:8Learners1}) with negative relative pseudo-regrets: $-10$, even if the algorithms are, on average, almost selected uniformly as Figure \ref{fig:8Learners2} reveals. Each individual run may present different ratios, depending on the quality of the trained policies. ESBAS also offers some curriculum learning, but more importantly, early bad policies are avoided.


Algorithms with a constant policy do not improve over time and the full reset of the $K$-multi armed bandit urges ESBAS to unnecessarily explore again and again the same underachieving algorithm. One easy way to circumvent this drawback is to use this knowledge and to not reset their arms. By operating this way, when the learning algorithm(s) start(s) outperforming the constant one, ESBAS simply neither exploits nor explores the constant algorithm anymore. Without arm reset for constant algorithms, ESBAS's learning curve follows perfectly the learning algorithm's learning curve when this one outperforms the constant algorithm and achieves strong negative relative pseudo-regrets. Again, the interested reader may refer to Table \ref{tab:recap} in supplementary material for the numerical results.


\section{Sliding Stochastic Bandit}
\label{sec:nesbas}
In this section, we propose to adapt ESBAS to a true online setting where algorithms update their policies after each transition. The stochastic bandit is now trained on a sliding window with the last $\tau/2$ selections. Even though the arms are not stationary over this window, there is the guarantee of eventually forgetting the oldest arm pulls. This algorithm is called SSBAS for Sliding Stochastic Bandit AS. Despite the lack of theoretical convergence bounds, we demonstrate on two domains and two different meta-optimisation tasks that SSBAS does exceptionally well, outperforming all algorithms in the portfolio by a wide margin.

\begin{wrapfigure}{r}{0.25\textwidth}
  \vspace{-45pt}
  \begin{minipage}{0.25\textwidth}
    \begin{figure}[H]
      \centering
      \includegraphics[width=\columnwidth]{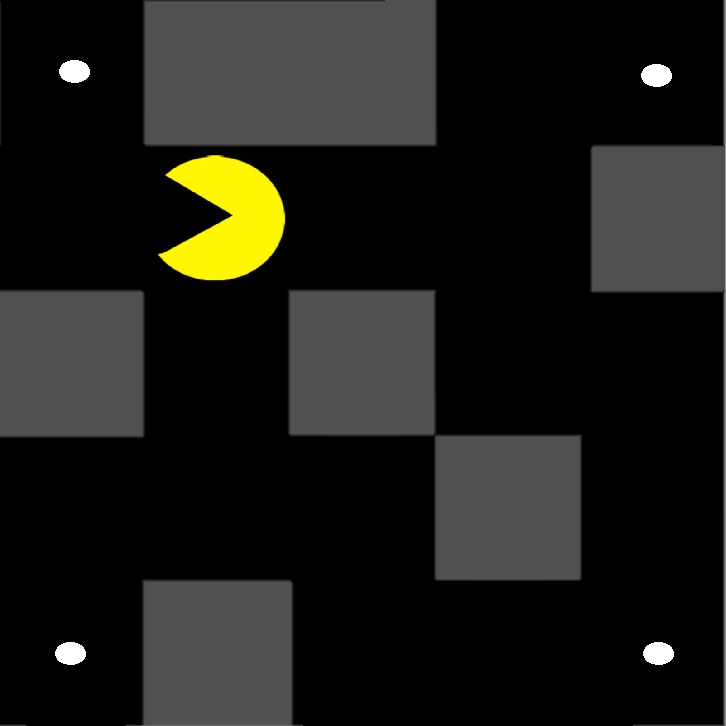}
      \vspace{-15pt}
      \caption{gridworld}
      \vspace{-5pt}
      \label{fig:grid}
    \end{figure}
  \end{minipage}
\end{wrapfigure}

\subsection{Gridworld domain}
The goal here is to demonstrate that SSBAS can perform efficient hyper-parameter optimisation on a simple tabular domain: a 5x5 gridworld problem (see Figure \ref{fig:grid}), where the goal is to collect the fruits placed at each corner as fast as possible. The episodes terminate when all fruits have been collected or after 100 transitions. The objective function $\mu$ used to optimise the stochastic bandit $\Psi$ is no longer the RL return, but the time spent to collect all the fruits (200 in case of it did not). The agent has 18 possible positions and there are $2^4-1 = 15$ non-terminal fruits configurations, resulting in 270 states. The action set is $\mathcal{A}=\left\{N,E,S,W\right\}$. The reward function mean is 1 when eating a fruit, 0 otherwise. The reward function is corrupted with a strong Gaussian white noise of variance $\zeta^2=1$. The portfolio is composed of 4 $Q$-learning algorithms varying from each other by their learning rates: $\left\{0.001,0.01,0.1,0.5\right\}$. They all have the same linearly annealing $\epsilon_\tau$-greedy exploration.

The selection ratios displayed in \ref{fig:gridworld} show that SSBAS selected the algorithm with the highest (0.5) learning rate in the first stages, enabling to propagate efficiently the reward signal through the visited states, then, overtime preferentially chooses the algorithm with a learning rate of 0.01, which is less sensible to the reward noise, finally, SSBAS favours the algorithm with the finest learning rate (0.001). After 1 million episodes, SSBAS enables to save half a transition per episode on average as compared to the best fixed learning rate value (0.1), and two transitions against the worst fixed learning rate in the portfolio (0.001). We also compared it to the efficiency of a linearly annealing learning rate: $1/(1+0.0001\tau)$: SSBAS performs under 21 steps on average after $10^5$, while the linearly annealing learning rate algorithm still performs a bit over 21 steps after $10^6$ steps.

\begin{wrapfigure}{r}{0.25\textwidth}
  \vspace{-45pt}
  \begin{minipage}{0.25\textwidth}
    \renewcommand\thesubfigure{\thefigure}
    \begin{figure}[H]
      \centering
      \includegraphics[width=\columnwidth]{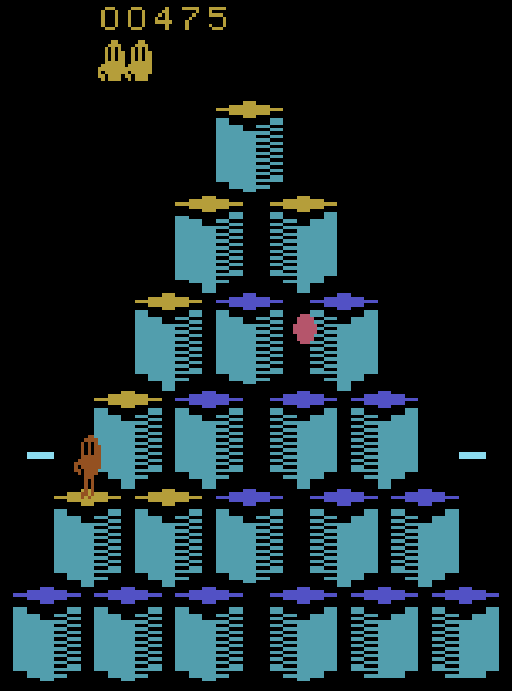}
      \vspace{-15pt}
      \caption{Q*bert}
      \vspace{-5pt}
      \label{fig:qbert}
    \end{figure}
  \end{minipage}
\end{wrapfigure}

\subsection{Atari domain: Q*bert}
We investigate here AS for deep RL on the Arcade Learning Environment (ALE, \cite{Bellemare2013}) and more precisely the game Q*bert (see a frame on Figure \ref{fig:qbert}), where the goal is to step once on each block. Then a new similar level starts. In later levels, one needs to step twice on each block, and even later stepping again on the same blocks will cancel the colour change. We used three different settings of DQN instances: \emph{small} uses the setting described in \cite{Mnih2013}, \emph{large} uses the setting in \cite{Mnih2015}, and finally \emph{huge} uses an even larger network (see Section \ref{sup:qbertexp} in the supplementary material for details). DQN is known to reach a near-human level performance at Q*bert. Our SSBAS instance runs 6 algorithms with 2 different random initialisations of each DQN setting. \emph{Disclaimer:} contrarily to other experiments, each curve is the result of a single run, and the improvement might be aleatory. Indeed, the DQN training is very long and SSBAS needs to train all the models in parallel. A more computationally-efficient solution might be to use the same architecture as \cite{Osband2016}.

\ref{fig:qbertres} reveals that SSBAS experiences a slight delay keeping in touch with the best setting performance during the initial learning phase, but, surprisingly, finds a better policy than the single algorithms in its portfolio and than the ones reported in the previous DQN articles. We observe that the \textit{large} setting is surprisingly by far the worst one on the Q*bert task, implying the difficulty to predict which model is the most efficient for a new task. SSBAS allows to select online the best one.

\captionsetup[subfigure]{labelformat=empty}
\renewcommand*{\thesubfigure}{Figure \thefigure}
\begin{figure}[t!]
  \centering
\subfloat[Figure \thefigure: gridworld ratios (3000 runs).]{
  \includegraphics[width=0.46\columnwidth]{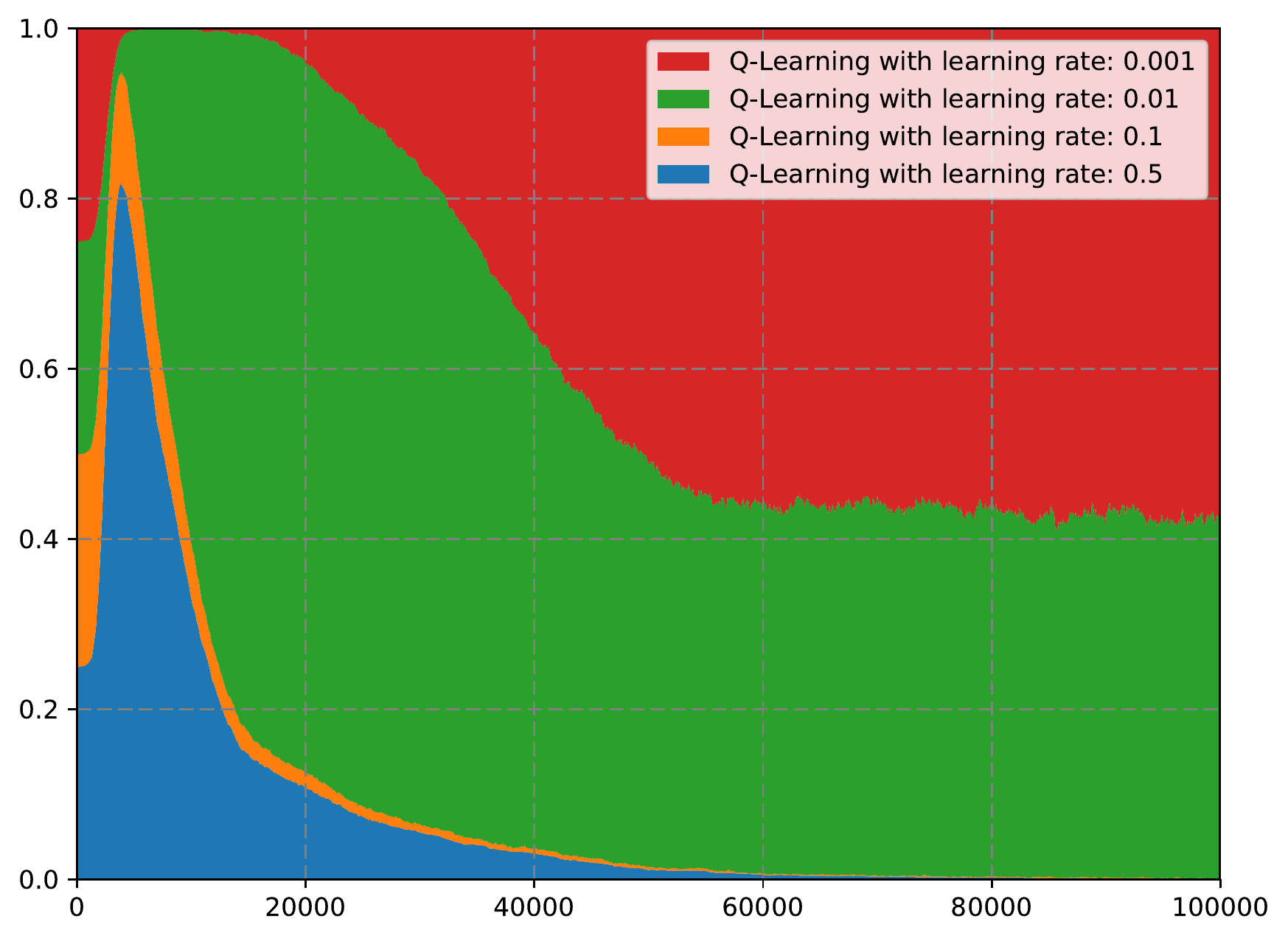}
 \label{fig:gridworld}
}\setcounter{figure}{5}
\subfloat[Figure \thefigure: Q*bert performance per episode (1 run).]{
  \includegraphics[trim = 60pt 25pt 70pt 40pt, clip, width=0.5\columnwidth]{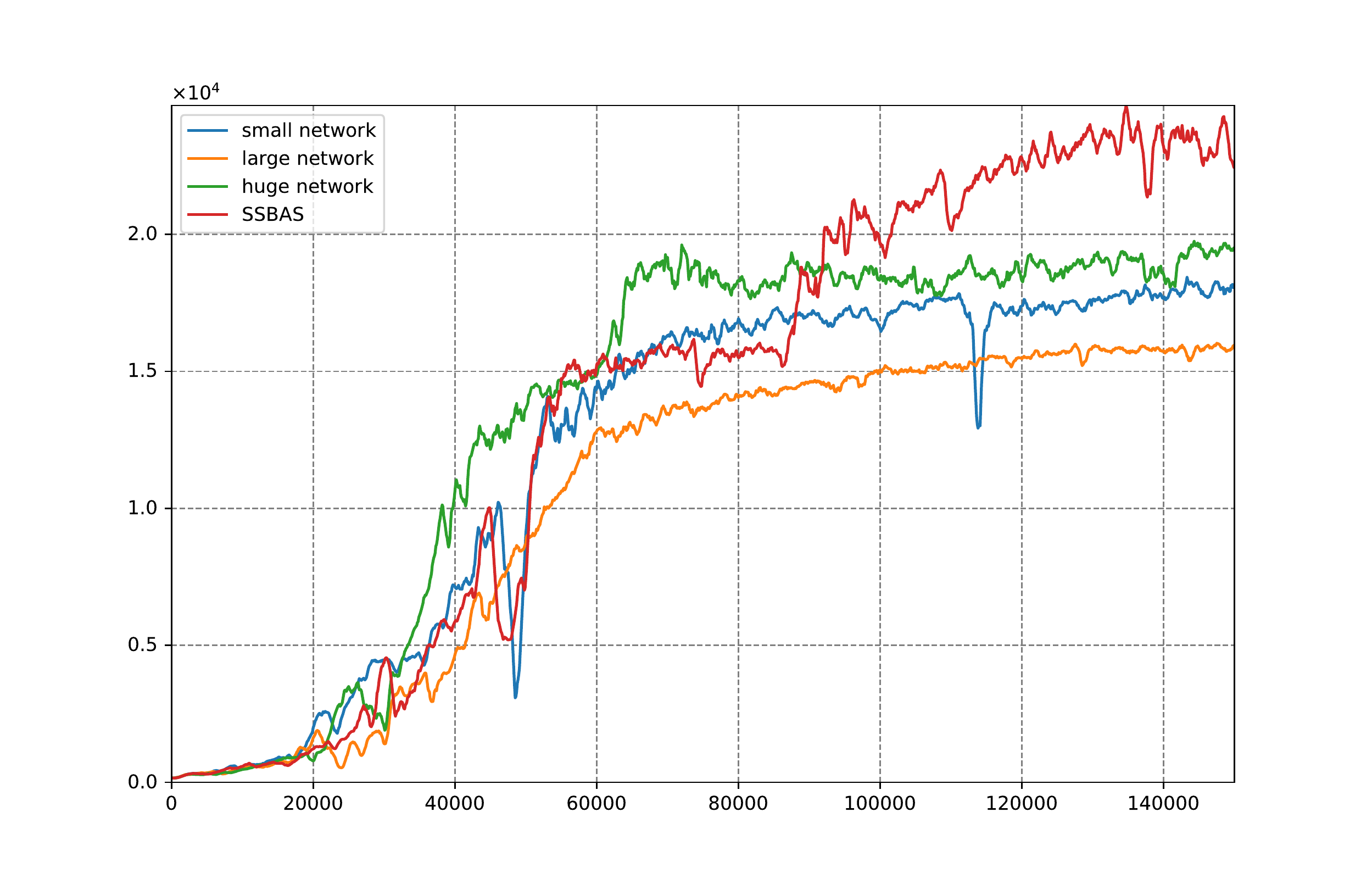}
 \label{fig:qbertres}
}\setcounter{figure}{6}
\end{figure}

\section{Conclusion}
\label{sec:conclusion}
In this article, we tackle the problem of selecting online off-policy RL algorithms. The problem is formalised as follows: from a fixed portfolio of algorithms, a meta-algorithm learns which one performs the best on the task at hand. Fairness of algorithm evaluation is granted by the fact that the RL algorithms learn off-policy. ESBAS, a novel meta-algorithm, is proposed. Its principle is to divide the meta-time scale into epochs. Algorithms are allowed to update their policies only at the start each epoch. As the policies are constant inside each epoch, the problem can be cast into a stochastic multi-armed bandit. An implementation is detailed and a theoretical analysis leads to upper bounds on the regrets. ESBAS is designed for the growing batch RL setting. This limited online setting is required in many real-world applications where updating the policy requires a lot of resources.

Experiments are first led on a negotiation dialogue game, interacting with a human data-built simulated user. In most settings, not only ESBAS demonstrates its efficiency to select the best algorithm, but it also outperforms the best algorithm in the portfolio thanks to curriculum learning, and variance reduction similar to that of Ensemble Learning. Then, ESBAS is adapted to a full online setting, where algorithms are allowed to update after each transition. This meta-algorithm, called SSBAS, is empirically validated on a fruit collection task where it performs efficient hyper-parameter optimisation. SSBAS is also evaluated on the Q*bert Atari game, where it achieves a substantial improvement over the single algorithm counterparts.

We interpret ESBAS/SSBAS's success at reliably outperforming the best algorithm in the portfolio as the result of the four following potential added values. First, curriculum learning: ESBAS/SSBAS selects the algorithm that is the most fitted with the data size. This property allows for instance to use shallow algorithms when having only a few data and deep algorithms once collected a lot. Second, diversified policies: ESBAS/SSBAS computes and experiments several policies. Those diversified policies generate trajectories that are less redundant, and therefore more informational. As a result, the policies trained on these trajectories should be more efficient. Third, robustness: if one algorithm fails at finding good policies, it will soon be discarded. This property prevents the agent from repeating again and again the same obvious mistakes. Four and last, run adaptation: of course, there has to be an algorithm that is the best on average for one given task at one given meta-time. But depending on the variance in the trajectory collection, it did not necessarily train the best policy for each run. The ESBAS/SSBAS meta-algorithm tries and selects the algorithm that is the best at \textit{each} run. Some of those properties are inherited by algorithm selection similarity with ensemble learning~\citep{Dietterich2002}. \cite{Wiering2008} uses a vote amongst the algorithms to decide the control of the next transition. Instead, ESBAS/SSBAS selects the best performing algorithm.

Regarding the portfolio design, it mostly depends on the available computational power per sample ratio. For practical implementations, we recommend to limit the use of two highly demanding algorithms, paired with several faster algorithms that can take care of first learning stages, and to use algorithms that are diverse regarding models, hypotheses, etc. Adding two algorithms that are too similar adds inertia, while they are likely to not be distinguishable by ESBAS/SSBAS. More detailed recommendations for building an efficient RL portfolio are left for future work.

\clearpage

\bibliographystyle{iclr2018_conference}
\bibliography{biblio}

\clearpage

\onecolumn
\appendix

\section{Glossary}
\begin{table}[h!]
\rowcolors{2}{gray!25}{white}
\centering
\def\arraystretch{1}
\begin{tabular}{|l|l|l|}
\hline
Symbol & Designation & First use  \\
\hline
$t$ & Reinforcement learning time \emph{aka} RL-time & Section \ref{sec:RLModel} \\
$\tau, T$ & Meta-algorithm time \emph{aka} meta-time & Section \ref{sec:RLModel} \\
$a(t)$ & Action taken at RL-time $t$ & Figure \ref{fig:RL} \\
$o(t)$ & Observation made at RL-time $t$ & Figure \ref{fig:RL} \\
$r(t)$ & Reward received at RL-time $t$ & Figure \ref{fig:RL} \\
$\mathcal{A}$ & Action set & Section \ref{sec:RLModel} \\
$\Omega$ & Observation set & Section \ref{sec:RLModel} \\
$R_{min}$ & Lower bound of values taken by $R$ & Section \ref{sec:RLModel} \\
$R_{max}$ & Upper bound of values taken by $R$ & Section \ref{sec:RLModel} \\
$\varepsilon_\tau$ & Trajectory collected at meta-time $\tau$ & Section \ref{sec:RLModel} \\
$|X|$ & Size of finite set/list/collection $X$ & Section \ref{sec:RLModel} \\
$\llbracket a, b\rrbracket$ & Ensemble of integers comprised between $a$ and $b$ & Section \ref{sec:RLModel} \\
$\mathscr{E}$ & Space of trajectories & Section \ref{sec:RLModel} \\
$\gamma$ & Discount factor of the decision process & Section \ref{sec:RLModel} \\
$\mu(\varepsilon_\tau)$ & Return of trajectory $\varepsilon_\tau$ \emph{aka} objective function & Section \ref{sec:RLModel} \\
$\mathcal{D}_T$ & Trajectory set collected until meta-time $T$ & Section \ref{sec:RLModel} \\
$\varepsilon_\tau(t)$ & History of $\varepsilon_\tau$ until RL-time $t$ & Section \ref{sec:RLModel} \\
$\pi$ & Policy & Section \ref{sec:RLModel} \\
$\pi^*$ & Optimal policy & Section \ref{sec:RLModel} \\
$\alpha$ & Algorithm & Section \ref{sec:RLModel} \\
$\mathscr{E}^+$ & Ensemble of trajectory sets & Section \ref{sec:RLModel} \\
$\mathcal{S}^{\alpha}_{\mathcal{D}}$ & State space of algorithm $\alpha$ from trajectory set $\mathcal{D}$ & Section \ref{sec:RLModel} \\
$\Phi^{\alpha}_{\mathcal{D}}$ & State space projection of algorithm $\alpha$ from trajectory set $\mathcal{D}$ & Section \ref{sec:RLModel} \\
$\pi^{\alpha}_{\mathcal{D}}$ & Policy learnt by algorithm $\alpha$ from trajectory set $\mathcal{D}$ & Section \ref{sec:RLModel} \\
$\mathcal{P}$ & Algorithm set \emph{aka} portfolio & Section \ref{sec:online} \\
$K$ & Size of the portfolio & Section \ref{sec:online} \\
$\sigma$ & Meta-algorithm & Section \ref{sec:online} \\
$\sigma(\tau)$ & Algorithm selected by meta-algorithm $\sigma$ at meta-time $\tau$ & Section \ref{sec:online} \\
$\mathbb{E}_{x_0}[f(x_0)]$ & Expected value of $f(x)$ conditionally to $x=x_0$ & Equation \ref{eq:cumulativeExpectedReturn} \\
$\mathbb{E}\mu^{\alpha}_{\mathcal{D}}$ & Expected return of trajectories controlled by policy $\pi^{\alpha}_{\mathcal{D}}$ & Equation \ref{eq:cumulativeExpectedReturn} \\
$\mathbb{E}\mu^{*}_{\infty}$ & Optimal expected return & Section \ref{sec:metaeval} \\
$\sigma^\alpha$ & Canonical meta-algorithm exclusively selecting algorithm $\alpha$ & Section \ref{sec:metaeval} \\
$\overline{\rho}^\sigma_{abs}(T)$ & Absolute pseudo-regret & Definition \ref{def:abs} \\
$\mathcal{O}(f(x))$ & Set of functions that get asymptotically dominated by $\kappa f(x)$ & Section \ref{sec:amabma} \\
$\kappa$ & Constant number & Theorem \ref{th:absolute} \\
$\Xi$ & Stochastic $K$-armed bandit algorithm & Section \ref{sec:esbasdesc} \\
$\beta$ & Epoch index & Section \ref{sec:esbasdesc} \\
$\xi$ & Parameter of the UCB algorithm & Pseudo-code \ref{alg:EpochAlgorithmSelectionAlgorithm} \\
$\overline{\rho}^\sigma_{ss}(T)$ & Short-sighted pseudo-regret & Definition \ref{def:ss} \\
$\Delta$ & Gap between the best arm and another arm & Theorem \ref{th:shortsightedregret} \\
$\dag$ & Index of the second best algorithm & Theorem \ref{th:shortsightedregret} \\
$\Delta^\dag_\beta$ & Gap of the second best arm at epoch $\beta$ & Theorem \ref{th:shortsightedregret} \\
$\lfloor x \rfloor$ & Rounding of $x$ at the closest integer below & Theorem \ref{th:shortsightedregret} \\
$\sigma^{\textsc{esbas}}$ & The ESBAS meta-algorithm & Theorem \ref{th:shortsightedregret} \\
$\Theta(f(x))$ & Set of functions asymptotically dominating $\kappa f(x)$ and dominated by $\kappa' f(x)$ & Table \ref{tab:absssbounds} \\
$\sigma^*$ & Best meta-algorithm among the canonical ones & Theorem \ref{th:absolute} \\
\hline
\end{tabular}
\def\arraystretch{1.5}
  \label{tab:gloss1}
\end{table}

\begin{table}[h!]
\rowcolors{2}{gray!25}{white}
\centering
\def\arraystretch{1}
\begin{tabular}{|l|l|l|}
\hline
Symbol & Designation & First use  \\
\hline
$p, p_s, p_u$ & Player, system player, and (simulated) user player & Section \ref{sup:comchannel} \\
$\eta$ & Option to agree or disagree on & Section \ref{sup:comchannel} \\
$\nu^{p}_\eta$ & Cost of booking/selecting option $\nu$ for player $p$ & Section \ref{sup:comchannel} \\
$\mathcal{U}[a,b]$ & Uniform distribution between $a$ and $b$ & Section \ref{sup:comchannel} \\
$s_f$ & Final state reached in a trajectory & Section \ref{sup:comchannel} \\
$R^{p_s}(s_f)$ & Immediate reward received by the system player at the end of the dialogue & Section \ref{sup:comchannel} \\
\textsc{RefProp}($\eta$) & Dialogue act consisting in proposing option $\eta$ & Section \ref{sup:comchannel} \\
\textsc{AskRepeat} & Dialogue act consisting in asking the other player to repeat what he said & Section \ref{sup:comchannel} \\
\textsc{Accept}($\eta$) & Dialogue act consisting in accepting proposition $\eta$ & Section \ref{sup:comchannel} \\
\textsc{EndDial} & Dialogue act consisting in ending the dialogue & Section \ref{sup:comchannel} \\
$SER^{u}_s$ & Sentence error rate of system $p_s$ listening to user $p_u$ & Section \ref{sup:comchannel} \\
$score_{asr}$ & Speech recognition score & Section \ref{sup:comchannel}  \\
$\mathcal{N}(x,v)$ & Normal distribution of centre $x$ and variance $v^2$ & Section \ref{sup:comchannel} \\
$\textsc{RefInsist}$ & \textsc{RefProp}($\eta$), with $\eta$ being the last proposed option &  Section \ref{sup:comchannel} \\
$\textsc{RefNewProp}$ & \textsc{RefProp}($\eta$), with $\eta$ being the best option that has not been proposed yet &  Section \ref{sup:comchannel}\\
$\textsc{Accept}$ & \textsc{Accept}($\eta$), with $\eta$ being the last understood option proposition & Section \ref{sup:comchannel} \\
$\epsilon_\beta$ & $\epsilon$-greedy exploration in function of epoch $\beta$ & Section \ref{sup:algos} \\
$\Phi^\alpha$ & Set of features of algorithm $\alpha$ & Section \ref{sup:algos} \\
$\phi_0$ & Constant feature: always equal to 1 $\alpha$ & Section \ref{sup:algos} \\
$\phi_{asr}$ & ASR feature: equal to the last recognition score & Section \ref{sup:algos} \\
$\phi_{dif}$ & Cost feature: equal to the difference of cost of proposed and targeted options & Section \ref{sup:algos} \\
$\phi_{t}$ & RL-time feature & Section \ref{sup:algos} \\
$\phi_{noise}$ & Noise feature & Section \ref{sup:algos} \\
\emph{simple} & F$Q$I with $\Phi=\{\phi_0,\phi_{asr},\phi_{dif},\phi_{t}\}$ & Section \ref{sup:algos} \\
\emph{fast} & F$Q$I with $\Phi=\{\phi_0,\phi_{asr},\phi_{dif}\}$  & Section \ref{sup:algos} \\
\emph{simple-2} & F$Q$I with $\Phi=\{\phi_0,\phi_{asr},\phi_{dif},\phi_{t},\phi_{asr}\phi_{dif},\phi_{t}\phi_{asr},\phi_{dif}\phi_{t},\phi_{asr}^2,\phi_{dif}^2,\phi_{t}^2\}$  & Section \ref{sup:algos} \\
\emph{fast-2} & F$Q$I with $\Phi=\{\phi_0,\phi_{asr},\phi_{dif},\phi_{asr}\phi_{dif},\phi_{asr}^2,\phi_{dif}^2\}$  & Section \ref{sup:algos} \\
\emph{n-1-simple} & F$Q$I with $\Phi=\{\phi_0,\phi_{asr},\phi_{dif},\phi_{t},\phi_{noise}\}$  & Section \ref{sup:algos} \\
\emph{n-1-fast} & F$Q$I with $\Phi=\{\phi_0,\phi_{asr},\phi_{dif},\phi_{noise}\}$  & Section \ref{sup:algos} \\
\emph{n-1-simple-2} & \begin{tabular}[x]{@{}r@{}}F$Q$I with $\Phi=\{\phi_0,\phi_{asr},\phi_{dif},\phi_{t},\phi_{noise},\phi_{asr}\phi_{dif},\phi_{t}\phi_{noise},\phi_{asr}\phi_{t},\;\;\;\;\;\;\;\;$\\ $\phi_{dif}\phi_{noise},\phi_{asr}\phi_{noise},\phi_{dif}\phi_{t},\phi_{asr}^2,\phi_{dif}^2,\phi_{t}^2,\phi_{noise}^2\}$\end{tabular}  & Section \ref{sup:algos} \\
\emph{n-1-fast-2} & F$Q$I with $\Phi=\{\phi_0,\phi_{asr},\phi_{dif},\phi_{t}\}$  & Section \ref{sup:algos} \\
\emph{constant-$\mu$} & Non-learning algorithm with average performance $\mu$ & Section \ref{sup:algos} \\
\emph{$\zeta$} & Number of noisy features added to the feature set & Section \ref{sup:algos} \\
$\mathbb{P}(x|y)$ & Probability that $X=x$ conditionally to $Y=y$ & Equation \ref{eq:delta} \\
\hline
\end{tabular}
\def\arraystretch{1.5}
  \label{tab:gloss2}
\end{table}

\clearpage

\section{Theoretical analysis}
\label{sup:ESBASanalysis}
The theoretical aspects of algorithm selection for reinforcement learning in general, and Epochal Stochastic Bandit Algorithm Selection in particular, are thoroughly detailed in this section. The proofs of the Theorems are provided in Sections \ref{sup:proof1}, \ref{sup:proof2}, and \ref{sup:proof3}. We recall and formalise the absolute pseudo-regret definition provided in Section \ref{sec:metaeval}.

\begin{restatable}[Absolute pseudo-regret]{defn}{absreg}
\label{def:abs}
The absolute pseudo-regret $\overline{\rho}^\sigma_{abs}(T)$ compares the meta-algorithm's expected return with the optimal expected return:
\begin{equation}
\overline{\rho}^\sigma_{abs}(T) = T\mathbb{E}\mu^*_\infty -  \mathbb{E}_\sigma\left[\sum_{\tau=1}^T\mathbb{E}\mu_{\mathcal{D}_{\tau-1}^\sigma}^{\sigma(\tau)} \right].
\end{equation}
\end{restatable}

\subsection{Assumptions}
The theoretical analysis is hindered by the fact that AS, not only directly influences the return distribution, but also the trajectory set distribution and therefore the policies learnt by algorithms for next trajectories, which will indirectly affect the future expected returns. In order to allow policy comparison, based on relation on trajectory sets they are derived from, our analysis relies on two assumptions.

\begin{restatable}[More data is better data]{ass}{morass}
The algorithms train better policies with a larger trajectory set on average, whatever the algorithm that controlled the additional trajectory:
\begin{equation}
	{\forall \mathcal{D} \in \mathscr{E}^+ \textnormal{, }\forall \alpha,\alpha' \in \mathcal{P}\textnormal{, }\:\:\:\mathbb{E}\mu_{\mathcal{D}}^{\alpha} \leq \mathbb{E}_{\alpha'}\left[\mathbb{E}\mu_{\mathcal{D}\cup\varepsilon^{\alpha'}}^{\alpha}\right]}.
\end{equation}
    \label{ass:monotony}
\end{restatable}
\vspace{-10pt}
Assumption \ref{ass:monotony} states that algorithms are off-policy learners and that additional data cannot lead to performance degradation on average. An algorithm that is not off-policy could be biased by a specific behavioural policy and would therefore transgress this assumption.

\begin{restatable}[Order compatibility]{ass}{ordass}
If an algorithm trains a better policy with one trajectory set than with another, then it remains the same, on average, after collecting an additional trajectory from any algorithm:
\begin{equation}
  \begin{array}{ll}
	\hspace{-5pt}\forall \mathcal{D},\mathcal{D}' \in \mathscr{E}^+ ,\;\; \forall \alpha,\alpha' \in \mathcal{P}\textnormal, \hspace{10pt}\mathbb{E}\mu_{\mathcal{D}}^{\alpha} < \mathbb{E}\mu_{\mathcal{D}'}^{\alpha} \Rightarrow \mathbb{E}_{\alpha'}\left[\mathbb{E}\mu_{\mathcal{D}\cup\varepsilon^{\alpha'}}^{\alpha}\right] \leq \mathbb{E}_{\alpha'}\left[\mathbb{E}\mu_{\mathcal{D}'\cup\varepsilon^{\alpha'}}^{\alpha}\right].
  \end{array}
\end{equation}
    \label{ass:compatibility}
\end{restatable}
\vspace{-10pt}
Assumption \ref{ass:compatibility} states that a performance relation between two policies trained on two trajectory sets is preserved on average after adding another trajectory, whatever the behavioural policy used to generate it. From these two assumptions, Theorem \ref{th:notworse} provides an upper bound in order of magnitude in function of the worst algorithm in the portfolio. It is verified for any meta-algorithm $\sigma$.
\begin{restatable}[Not worse than the worst]{thm}{notthm}
The absolute pseudo-regret is bounded by the worst algorithm absolute pseudo-regret in order of magnitude:
\begin{equation}
\forall \sigma, \;\;\;\;\; \overline{\rho}_{abs}^\sigma(T)\in\mathcal{O}\left(\max_{\alpha\in\mathcal{P}} \overline{\rho}_{abs}^{\sigma^\alpha}(T)\right).
\end{equation}
\label{th:notworse}
\end{restatable}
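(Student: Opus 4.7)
The plan is to reduce bounding $\overline{\rho}_{abs}^\sigma(T)$ to bounding $K=|\mathcal{P}|$ per-algorithm quantities, and then to dominate each by the matching canonical pseudo-regret using the two monotonicity assumptions.

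First, at every meta-time $\tau$, the expected return of whichever algorithm $\sigma$ selects lower-bounds the minimum-expected-return algorithm on the same trajectory set. Using non-negativity of each per-algorithm regret term,
\begin{equation*}
\mathbb{E}\mu^*_\infty - \mathbb{E}\mu_{\mathcal{D}_{\tau-1}^\sigma}^{\sigma(\tau)} \;\leq\; \max_{\alpha\in\mathcal{P}} \bigl(\mathbb{E}\mu^*_\infty - \mathbb{E}\mu_{\mathcal{D}_{\tau-1}^\sigma}^{\alpha}\bigr) \;\leq\; \sum_{\alpha\in\mathcal{P}} \bigl(\mathbb{E}\mu^*_\infty - \mathbb{E}\mu_{\mathcal{D}_{\tau-1}^\sigma}^{\alpha}\bigr).
\end{equation*}
Summing over $\tau$ and taking the $\sigma$-expectation yields $\overline{\rho}_{abs}^\sigma(T) \leq \sum_{\alpha}\overline{\rho}_\alpha^\sigma(T)$, where $\overline{\rho}_\alpha^\sigma(T) := T\mathbb{E}\mu^*_\infty - \mathbb{E}_\sigma[\sum_\tau \mathbb{E}\mu_{\mathcal{D}_{\tau-1}^\sigma}^{\alpha}]$ is the pseudo-regret of \emph{passively} evaluating $\alpha$ on the trajectory set produced by $\sigma$.

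Next, for each fixed $\alpha$, I would show by induction on $\tau$ that $\mathbb{E}_\sigma[\mathbb{E}\mu_{\mathcal{D}_\tau^\sigma}^\alpha] \geq \mathbb{E}_{\sigma^\alpha}[\mathbb{E}\mu_{\mathcal{D}_\tau^{\sigma^\alpha}}^\alpha]$, i.e.\ that mixing in trajectories generated by other portfolio algorithms does not leave $\alpha$'s learner worse off than running $\alpha$ alone. The base case $\tau=0$ is immediate since both sets are empty. In the inductive step, Assumption~1 supplies the monotonicity of the expected return upon appending a fresh trajectory to a fixed set (for any behavioural algorithm), and Assumption~2 transports the inductive inequality from the parent sets $\mathcal{D}_{\tau-1}^\sigma$ and $\mathcal{D}_{\tau-1}^{\sigma^\alpha}$ to their augmentations. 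Granted this claim, $\overline{\rho}_\alpha^\sigma(T) \leq \overline{\rho}_{abs}^{\sigma^\alpha}(T)$ for every $\alpha$, and combining with the previous step yields
\begin{equation*}
\overline{\rho}_{abs}^\sigma(T) \;\leq\; \sum_{\alpha\in\mathcal{P}} \overline{\rho}_{abs}^{\sigma^\alpha}(T) \;\leq\; K\max_{\alpha\in\mathcal{P}} \overline{\rho}_{abs}^{\sigma^\alpha}(T),
\end{equation*}
which, with $K$ a fixed portfolio size, is in $\mathcal{O}(\max_\alpha \overline{\rho}_{abs}^{\sigma^\alpha}(T))$.

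The main obstacle is the inductive step above, because the two processes append trajectories drawn from different behavioural policies: $\sigma$ appends a trajectory generated by $\sigma(\tau)$, while $\sigma^\alpha$ appends one generated by $\alpha$. Assumption~2 is a pointwise comparison that presupposes a \emph{common} appending algorithm $\alpha'$, so it does not directly bridge the two coupled-but-distinct augmented distributions here. Converting Assumption~2 into an expectation-level statement across the two processes will require a coupling on the realised parent sets, using Assumption~1 to absorb the mismatch between the two behavioural choices at no loss, and then averaging over a common reference algorithm. Getting this bookkeeping right -- in particular, handling the events on which the inductive inequality is strict versus tight -- is where essentially all the technical effort lies.
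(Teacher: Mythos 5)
Your first step is sound, but the reduction it produces is the wrong one, and the gap you flag at the end is not mere bookkeeping --- it is fatal to this route. Your inductive claim compares $\mathcal{D}_\tau^\sigma$ and $\mathcal{D}_\tau^{\sigma^\alpha}$, two trajectory sets of the \emph{same cardinality} but different provenance: the first may contain few or even no trajectories generated by $\alpha$, while the second contains only such trajectories. Neither assumption orders two sets of equal size by provenance. Assumption \ref{ass:monotony} only says that a \emph{larger} set is at least as good on average, and Assumption \ref{ass:compatibility} only propagates an already-established inequality when the \emph{same} behavioural algorithm $\alpha'$ appends the new trajectory to both sides. In your induction the two processes append trajectories drawn from different behavioural algorithms ($\sigma(\tau)$ versus $\alpha$), so Assumption \ref{ass:compatibility} never applies, and there is no surplus of data on the $\sigma$ side for Assumption \ref{ass:monotony} to exploit. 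Indeed both assumptions are consistent with a scenario in which $\alpha$ learns strictly better from its own trajectories than from another algorithm's, in which case $\mathbb{E}_\sigma\left[\mathbb{E}\mu_{\mathcal{D}_\tau^\sigma}^\alpha\right] < \mathbb{E}_{\sigma^\alpha}\left[\mathbb{E}\mu_{\mathcal{D}_\tau^{\sigma^\alpha}}^\alpha\right]$ (take $\sigma$ never selecting $\alpha$). Hence the intermediate lemma $\overline{\rho}_\alpha^\sigma(T) \le \overline{\rho}_{abs}^{\sigma^\alpha}(T)$ is not provable from the stated assumptions.

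The paper avoids this by choosing a different decomposition: the total regret is split \emph{exactly} (not via the lossy bound $\max \le \sum$) over the meta-times $\tau_i^\alpha$ at which $\sigma$ actually selects $\alpha$, and the induction compares $\mathbb{E}\mu^\alpha$ at the $i$\textsuperscript{th} selection of $\alpha$ under $\sigma$ with $\mathbb{E}\mu^\alpha$ after $i-1$ episodes under $\sigma^\alpha$. With this indexing the two processes append $\alpha$-generated trajectories in lockstep, so Assumption \ref{ass:compatibility} applies with the common appending algorithm $\alpha'=\alpha$, and all trajectories generated by other algorithms between two consecutive selections of $\alpha$ appear purely as \emph{extra} data on the $\sigma$ side, which Assumption \ref{ass:monotony} absorbs at no cost. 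That realignment --- charging each algorithm only for the rounds on which it is selected --- is the missing idea; once you adopt it, the remainder of your argument (summing the $K$ per-algorithm bounds and concluding with the constant factor $K$) goes through essentially as you wrote it.
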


Contrarily to what the name of Theorem \ref{th:notworse} suggests, a meta-algorithm might be worse than the worst algorithm (similarly, it can be better than the best algorithm), but not in order of magnitude. Its proof is rather complex for such an intuitive result because, in order to control all the possible outcomes, one needs to translate the selections of algorithm $\alpha$ with meta-algorithm $\sigma$ into the canonical meta-algorithm $\sigma^\alpha$'s view.

\subsection{Short-sighted pseudo-regret analysis of ESBAS}
ESBAS intends to minimise the regret for not choosing the best algorithm at a given meta-time $\tau$. It is short-sighted: it does not intend to optimise the algorithms learning.

\begin{restatable}[Short-sighted pseudo-regret]{defn}{ssreg}
\label{def:ss}
The short-sighted pseudo-regret $\overline{\rho}^\sigma_{ss}(T)$ is the difference between the immediate best expected return algorithm and the one selected:
\begin{equation}
\overline{\rho}^\sigma_{ss}(T) = \mathbb{E}_\sigma\left[\sum_{\tau=1}^T\left(\max_{\alpha\in\mathcal{P}} \mathbb{E}\mu_{\mathcal{D}_{\tau-1}^\sigma}^{\alpha} -\mathbb{E}\mu_{\mathcal{D}_{\tau-1}^\sigma}^{\sigma(\tau)}\right) \right]. \label{eq:shortsightedregret}
\end{equation}
\end{restatable}

\begin{restatable}[ESBAS short-sighted pseudo-regret]{thm}{ssthm}
If the stochastic multi-armed bandit $\Xi$ guarantees a regret of order of magnitude $\mathcal{O}(\log(T)/\Delta_\beta^\dag)$, then:
\begin{equation}
\overline{\rho}^{\sigma^{\textsc{esbas}}}_{ss}(T)\in\mathcal{O}\left(\displaystyle\sum_{\beta=0}^{\lfloor \log(T) \rfloor} \cfrac{\beta}{\Delta_\beta^\dag}\right).
\end{equation}
\label{th:shortsightedregret}
\end{restatable}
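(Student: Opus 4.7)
The plan is to exploit the defining structural property of ESBAS---within each epoch, all policies are frozen---so as to decompose the short-sighted pseudo-regret into a sum of per-epoch stationary-bandit regrets, and then invoke the assumed bandit bound epoch by epoch.

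First, I would partition the meta-time range $\llbracket 1,T \rrbracket$ into the epochs $\llbracket 2^\beta, 2^{\beta+1}-1 \rrbracket$ for $\beta = 0, \ldots, \lfloor \log(T) \rfloor$. Looking at Pseudo-code \ref{alg:EpochAlgorithmSelectionAlgorithm}, no algorithm updates its policy inside an epoch, so for every $\tau$ in epoch $\beta$ the operative policy of algorithm $\alpha$ is the frozen $\pi^\alpha_{\mathcal{D}_{2^\beta-1}^\sigma}$, and therefore $\mathbb{E}\mu^\alpha_{\mathcal{D}_{\tau-1}^\sigma} = \mathbb{E}\mu^\alpha_{\mathcal{D}_{2^\beta-1}^\sigma}$ is constant in $\tau$ across the epoch. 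In particular, the identity of $\argmax_{\alpha} \mathbb{E}\mu^\alpha_{\mathcal{D}_{\tau-1}^\sigma}$ and the minimal non-null gap $\Delta_\beta^\dag$ become deterministic once the history up to meta-time $2^\beta - 1$ is fixed.

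Second, conditioning on $\mathcal{D}_{2^\beta-1}^\sigma$ (equivalently, on the $K$ frozen policies at the epoch's start), the selection problem during epoch $\beta$ is exactly a stationary $K$-armed stochastic bandit with returns bounded in $[R_{min}/(1-\gamma), R_{max}/(1-\gamma)]$ and minimal non-null gap $\Delta_\beta^\dag$. The hypothesised bound on $\Xi$ of order $\mathcal{O}(\log(n)/\Delta)$ after $n$ pulls then yields a conditional expected short-sighted regret over this epoch of $\mathcal{O}(\log(2^\beta)/\Delta_\beta^\dag) = \mathcal{O}(\beta/\Delta_\beta^\dag)$. Taking the outer expectation over the history and summing over $\beta = 0, \ldots, \lfloor \log(T) \rfloor$ gives the claimed bound, with the contribution of the (possibly truncated) final epoch absorbed into the $\mathcal{O}$.

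The main obstacle is handling the fact that $\Delta_\beta^\dag$ is itself random, because the frozen policies at epoch $\beta$ depend on every prior ESBAS selection through $\mathcal{D}_{2^\beta-1}^\sigma$. One must either interpret the $\Delta_\beta^\dag$ appearing in the final bound as the conditional gap kept inside the outer expectation, or invoke the bandit bound pathwise in $\Delta_\beta^\dag$ so that the summation of conditional bounds may be exchanged with the expectation. A secondary nuisance is that the bandit-within-epoch samples are independent only conditionally on the frozen policies, but because the standard $\mathcal{O}(\log(n)/\Delta)$ analyses of UCB-type algorithms use only bounded support and conditional independence within each arm, this is enough to justify applying the assumed bound in each epoch.
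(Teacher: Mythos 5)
Your proposal is correct and follows essentially the same route as the paper's proof: decompose the short-sighted pseudo-regret over epochs using the frozen-policy property, apply the assumed $\mathcal{O}(\log(n)/\Delta)$ stationary-bandit bound within each epoch to get a per-epoch contribution of $\mathcal{O}(\beta/\Delta_\beta^\dag)$, and sum over $\beta \leq \lfloor \log(T)\rfloor$. Your remark on the randomness of $\Delta_\beta^\dag$ (keeping it inside the outer expectation or arguing pathwise) is in fact a point the paper's own proof passes over silently, so your version is, if anything, slightly more careful on that detail.
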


Theorem \ref{th:shortsightedregret} expresses in order of magnitude an upper bound for the short-sighted pseudo-regret of ESBAS. But first, let define the \emph{gaps}: $\Delta_\beta^\alpha = \max_{\alpha'\in\mathcal{P}} \mathbb{E}\mu_{\mathcal{D}_{2^\beta-1}^{\sigma^{\textsc{esbas}}}}^{\alpha'} - \mathbb{E}\mu_{\mathcal{D}_{2^\beta-1}^{\sigma^{\textsc{esbas}}}}^\alpha$. It is the difference of expected return between the best algorithm during epoch $\beta$ and algorithm $\alpha$. The smallest non null gap at epoch $\beta$ is noted: $\Delta_\beta^\dag = \min_{\alpha\in\mathcal{P},\Delta_\beta^\alpha>0}\Delta_\beta^\alpha$. If $\Delta_\beta^\dag$ does not exist, \emph{i.e.} if there is no non-null gap, the regret is null.
  
Several upper bounds in order of magnitude on $\overline{\rho}_{ss}(T)$ can be easily deduced from Theorem \ref{th:shortsightedregret}, depending on an order of magnitude of $\Delta_\beta^\dag$. See the corollaries in Section \ref{sup:corollaries}, Table \ref{tab:absssbounds} and more generally Section \ref{sec:esbastheo} for a discussion.

\subsection{ESBAS absolute pseudo-regret analysis}

The short-sighted pseudo-regret optimality depends on the meta-algorithm itself. For instance, a poor deterministic algorithm might be optimal at meta-time $\tau$ but yield no new information, implying the same situation at meta-time $\tau+1$, and so on. Thus, a meta-algorithm that exclusively selects the deterministic algorithm would achieve a short-sighted pseudo-regret equal to 0, but selecting other algorithms are, in the long run, more efficient. Theorem \ref{th:shortsightedregret} is a necessary step towards the absolute pseudo-regret analysis.

The absolute pseudo-regret can be decomposed between the absolute pseudo-regret of the best canonical meta-algorithm (\emph{i.e.} the algorithm that finds the best policy), the regret for not always selecting the best algorithm, and potentially not learning as fast, and the short-sighted regret: the regret for not gaining the returns granted by the best algorithm. This decomposition leads to Theorem \ref{th:absolute} that provides an upper bound of the absolute pseudo-regret in function of the best canonical meta-algorithm, and the short-sighted pseudo-regret.
  
But first let us introduce the \emph{fairness} assumption. The \emph{fairness of budget distribution} has been formalised in \cite{Cauwet2015}. It is the property stating that every algorithm in the portfolio has as much resources as the others, in terms of computational time and data. It is an issue in most online AS problems, since the algorithm that has been the most selected has the most data, and therefore must be the most advanced one. A way to circumvent this issue is to select them equally, but, in an online setting, the goal of AS is precisely to select the best algorithm as often as possible. Our answer is to require that all algorithms in the portfolio are learning \emph{off-policy}, \emph{i.e.} without bias induced by the behavioural policy used in the learning dataset. By assuming that all algorithms learn off-policy, we allow \emph{information sharing}~\cite{Cauwet2015} between algorithms. They share the trajectories they generate. As a consequence, we can assume that every algorithm, the least or the most selected ones, will learn from the same trajectory set. Therefore, the control unbalance does not directly lead to unfairness in algorithms performances: all algorithms learn equally from all trajectories. However, unbalance might still remain in the exploration strategy if, for instance, an algorithm takes more benefit from the exploration it has chosen than the one chosen by another algorithm. For analysis purposes, Theorem \ref{th:absolute} assumes the fairness of AS:

\begin{restatable}[Learning is fair]{ass}{fairass}
If one trajectory set is better than another for training one given algorithm, it is the same for other algorithms. 
\begin{equation}
\begin{array}{ll}
\forall \alpha,\alpha'\in\mathcal{P}, \;\;\forall \mathcal{D}, \mathcal{D}' \in \mathscr{E}^+, \hspace{10pt}\mathbb{E}\mu_{\mathcal{D}}^{\alpha} < \mathbb{E}\mu_{\mathcal{D}'}^{\alpha} \Rightarrow \mathbb{E}\mu_{\mathcal{D}}^{\alpha'} \leq \mathbb{E}\mu_{\mathcal{D}'}^{\alpha'}.
\end{array}
\end{equation}
    \label{ass:fairlearning}
\end{restatable}

\begin{restatable}[ESBAS absolute pseudo-regret upper bound]{thm}{absthm}
Under assumption \ref{ass:fairlearning}, if the stochastic multi-armed bandit $\Xi$ guarantees that the best arm has been selected in the $T$ first episodes at least $T/K$ times, with high probability $\delta_T \in \mathcal{O}(1/T)$, then:
\begin{equation}
\begin{array}{ll}
\hspace{-30pt}\exists \kappa>0, \;\; \forall T\geq 9K^2, \;\;\;\; \overline{\rho}^{\sigma^{\textsc{esbas}}}_{abs}(T) \leq (3K+1) \overline{\rho}^{\sigma^*}_{abs}\left(\frac{T}{3K}\right) + \overline{\rho}_{ss}^{\sigma^{\textsc{esbas}}}(T) + \kappa\log(T),
\end{array}
\end{equation}
where meta-algorithm $\sigma^*$ selects exclusively algorithm $\alpha^* = \argmin_{\alpha\in\mathcal{P}}\overline{\rho}^{\sigma^\alpha}_{abs}\left(T\right)$.
\label{th:absolute}
\end{restatable}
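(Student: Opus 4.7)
My plan is to split the absolute pseudo-regret into a ``portfolio-optimality'' term and the short-sighted term that already appears in the stated bound:
\begin{equation*}
\overline{\rho}^{\sigma^{\textsc{esbas}}}_{abs}(T) = \mathbb{E}_{\sigma^{\textsc{esbas}}}\!\left[\sum_{\tau=1}^{T}\Bigl(\mathbb{E}\mu^{*}_{\infty} - \max_{\alpha\in\mathcal{P}} \mathbb{E}\mu_{\mathcal{D}^{\sigma^{\textsc{esbas}}}_{\tau-1}}^{\alpha}\Bigr)\right] + \overline{\rho}^{\sigma^{\textsc{esbas}}}_{ss}(T).
\end{equation*}
The short-sighted term is absorbed as-is, so all the work reduces to upper-bounding the first summand, call it $A(T)$, by $(3K+1)\,\overline{\rho}^{\sigma^*}_{abs}(T/(3K)) + \kappa\log(T)$.

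I would then invoke Assumption \ref{ass:fairlearning} (fair learning), which forces an algorithm-independent quality ordering on trajectory sets, and in particular ensures that $\max_{\alpha}\mathbb{E}\mu_{\mathcal{D}}^{\alpha} \geq \mathbb{E}\mu_{\mathcal{D}}^{\alpha^*}$ for every $\mathcal{D}$, where $\alpha^*$ is the canonical best algorithm defining $\sigma^*$. So $A(T)$ is controlled by summing $\mathbb{E}\mu^{*}_{\infty} - \mathbb{E}\mu_{\mathcal{D}^{\sigma^{\textsc{esbas}}}_{\tau-1}}^{\alpha^*}$. The crux is then to compare ESBAS's data at meta-time $\tau$ with canonical $\sigma^*$ data at meta-time $\lfloor\tau/(3K)\rfloor$. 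The bandit hypothesis supplies the bridge: inside the epoch $\beta$ containing $\tau$ (so $2^{\beta}\leq\tau<2^{\beta+1}$), $\Xi$ selects the best arm at least $2^{\beta}/K \geq \tau/(3K)$ times with probability $1-\delta_{2^{\beta}}=1-\mathcal{O}(2^{-\beta})$. Since the best-of-epoch arm equals $\alpha^*$ for $\beta$ large enough (again by fair learning and by definition of $\alpha^*$), ESBAS has, with high probability, collected at least $\tau/(3K)$ trajectories under the policy $\alpha^*$ would have trained at the start of the epoch.

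From there, I would close the argument by induction on $\beta$: at the start of epoch $\beta$, $\alpha^*$'s ESBAS-trained policy is, by Assumptions \ref{ass:monotony} and \ref{ass:fairlearning}, at least as good in expectation as $\alpha^*$'s $\sigma^*$-trained policy after $2^{\beta}/(3K)$ episodes; the inductive step then adds within epoch $\beta$ at least $2^{\beta}/K$ additional $\alpha^*$-generated trajectories on top of that, and monotonicity propagates the inequality. Summing over $\tau\leq T$ converts each $\sigma^*$-step into $3K$ ESBAS-steps, yielding the $(3K+1)\,\overline{\rho}^{\sigma^*}_{abs}(T/(3K))$ factor (the ``$+1$'' covers the rounding on the last, possibly partial, epoch). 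The failure events, each with probability $\delta_{2^{\beta}}\in\mathcal{O}(2^{-\beta})$, contribute at most a bounded constant per epoch (returns are bounded and $\delta_{2^{\beta}}\cdot 2^{\beta}=\mathcal{O}(1)$); aggregating over the $\lfloor\log(T)\rfloor$ epochs produces the $\kappa\log(T)$ remainder. The threshold $T\geq 9K^{2}$ is exactly the regime in which the bandit guarantee ``$T/K$ pulls of the best arm'' starts to be informative from the first non-trivial epochs on.

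The hard part will be making the coupling rigorous while handling the mutual dependence between $\Xi$'s random draws and the policies those draws subsequently train: one cannot naively substitute ESBAS's dataset with a hypothetical $\sigma^*$ dataset because the joint distributions differ, only their algorithm-$\alpha^*$ performances are comparable in expectation. The identification of the ``best-of-epoch'' arm with $\alpha^*$ is likewise delicate in the early epochs where the gaps are small; this is very likely what the $9K^{2}$ threshold is tuned to absorb. Without Assumption \ref{ass:fairlearning}, one would have to track an epoch-dependent best arm and the clean multiplicative $(3K+1)$ factor would deteriorate.
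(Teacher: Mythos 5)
Your proposal is correct and follows essentially the same route as the paper's proof: the paper likewise splits the absolute pseudo-regret into a term compared against $\sigma^*$ run on the (at least) $\lfloor(\tau-1)/3K\rfloor$ trajectories that $\alpha^*$ generated inside ESBAS (using the fairness and order-compatibility assumptions to couple the two datasets), plus a term bounded by the short-sighted pseudo-regret, and it obtains the $\kappa\log(T)$ remainder by summing the failure probabilities $\delta_\tau\in\mathcal{O}(1/\tau)$. The one slip is your claim that the current epoch alone supplies $2^\beta/K\geq\tau/(3K)$ best-arm pulls — early in an epoch it cannot, since that epoch's bandit has barely run; the paper instead argues that the longest bandit run so far (the current epoch or the previous, completed one) has length at least $\tau/3$, which is what actually delivers the $3K$ constant.
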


Successive and Median Elimination~\citep{EvenDar2002} and Upper Confidence Bound~\citep{Auer2002} under some conditions~\citep{Audibert2010} are examples of appropriate $\Xi$ satisfying both conditions stated in Theorems \ref{th:shortsightedregret} and \ref{th:absolute}. Again, see Table \ref{tab:absssbounds} and more generally Section \ref{sec:esbastheo} for a discussion of those bounds.

\clearpage

\section{Experimental details}
\subsection{Dialogue experiments details}
\label{sup:dialogueexp}
\subsubsection{The Negotiation Dialogue Game}
\label{sup:comchannel}
ESBAS practical efficiency is illustrated on a dialogue negotiation game~\citep{Laroche2016} that involves two players: the system $p_s$ and a user $p_u$. Their goal is to find an agreement among $4$ alternative options. At each dialogue, for each option $\eta$, players have a private uniformly drawn cost $\nu^{p}_\eta \thicksim \mathcal{U}[0,1]$ to agree on it. Each player is considered fully empathetic to the other one. As a result, if the players come to an agreement, the system's immediate reward at the end of the dialogue is $R^{p_s}(s_f) = 2 - \nu_{\eta}^{p_s} - \nu_{\eta}^{p_u}$, where $s_f$ is the state reached by player $p_s$ at the end of the dialogue, and $\eta$ is the agreed option; if the players fail to agree, the final immediate reward is $R^{p_s}(s_f)=0$, and finally, if one player misunderstands and agrees on a wrong option, the system gets the cost of selecting option $\eta$ without the reward of successfully reaching an agreement: $R^{p_s}(s_f) = -\nu_{\eta}^{p_s} - \nu_{\eta'}^{p_u}$.

Players act each one in turn, starting randomly by one or the other. They have four possible actions. First, \textsc{RefProp}$(\eta)$: the player makes a proposition: option $\eta$. If there was any option previously proposed by the other player, the player refuses it. Second, \textsc{AskRepeat}: the player asks the other player to repeat its proposition. Third, \textsc{Accept}$(\eta)$: the player accepts option $\eta$ that was understood to be proposed by the other player. This act ends the dialogue either way: whether the understood proposition was the right one or not. Four, \textsc{EndDial}: the player does not want to negotiate anymore and ends the dialogue with a null reward.

Understanding through speech recognition of system $p_s$ is assumed to be noisy: with a sentence error rate of probability $SER_s^u=0.3$, an error is made, and the system understands a random option instead of the one that was actually pronounced. In order to reflect human-machine dialogue asymmetry, the simulated user always understands what the system says: $SER_u^s=0$. We adopt the way \cite{Khouzaimi2015} generate speech recognition confidence scores: $score_{asr} = \frac{1}{1+e^{-X}} \text{ where } X \thicksim \mathcal{N}(x,0.2)$. If the player understood the right option $x=1$, otherwise $x=0$. 

The system, and therefore the portfolio algorithms, have their action set restrained to five non parametric actions: \textsc{RefInsist} $\Leftrightarrow$ \textsc{RefProp}$(\eta_{t-1})$, $\eta_{t-1}$ being the option lastly proposed by the system; \textsc{RefNewProp} $\Leftrightarrow$ \textsc{RefProp}$(\eta)$, $\eta$ being the preferred one after $\eta_{t-1}$, \textsc{AskRepeat}, \textsc{Accept}$\Leftrightarrow$  \textsc{Accept}$(\eta)$, $\eta$ being the last understood option proposition and \textsc{EndDial}.

\subsubsection{Learning algorithms}
\label{sup:algos}
All learning algorithms are using Fitted-$Q$ Iteration~\citep{Ernst2005}, with a linear parametrisation and an $\epsilon_\beta$-greedy exploration : $\epsilon_\beta = 0.6^\beta$, $\beta$ being the epoch number. Six algorithms differing by their state space representation $\Phi^\alpha$ are considered:
\begin{itemize}
\item \emph{simple}: state space representation of four features: the constant feature $\phi_0=1$, the last recognition score feature $\phi_{asr}$, the difference between the cost of the proposed option and the next best option $\phi_{dif}$, and finally an RL-time feature $\phi_t = \frac{0.1t}{0.1t+1}$. $\Phi^\alpha = \{\phi_0,\phi_{asr},\phi_{dif},\phi_t\}$.
\item \emph{fast}: $\Phi^\alpha = \{\phi_0,\phi_{asr},\phi_{dif}\}$.
\item \emph{simple-2}: state space representation of ten second order polynomials of \emph{simple} features. $\Phi^\alpha = \{\phi_0$, $\phi_{asr}$, $\phi_{dif}$, $\phi_{t}$, $\phi_{asr}^2$, $\phi_{dif}^2$, $\phi_{t}^2$, $\phi_{asr}\phi_{dif}$, $\phi_{asr}\phi_{t}$, $\phi_{t}\phi_{dif}\}$.
\item \emph{fast-2}: state space representation of six second order polynomials of \emph{fast} features. $\Phi^\alpha = \{\phi_0$, $\phi_{asr}$, $\phi_{dif}$, $\phi_{asr}^2$, $\phi_{dif}^2$, $\phi_{asr}\phi_{dif}\}$.
\item \emph{n-$\zeta$-\{simple/fast/simple-2/fast-2\}}: Versions of previous algorithms with $\zeta$ additional features of noise, randomly drawn from the uniform distribution in $\left[0,1\right]$. 
\item \emph{constant-$\mu$}: the algorithm follows a deterministic policy of average performance $\mu$ without exploration nor learning. Those constant policies are generated with \emph{simple-2} learning from a predefined batch of limited size.
\end{itemize}

\subsubsection{Evaluation protocol}
In all our experiments, ESBAS has been run with UCB parameter $\xi=1/4$. We consider 12 epochs. The first and second epochs last $20$ meta-time steps, then their lengths double at each new epoch, for a total of 40,920 meta-time steps and as many trajectories. $\gamma$ is set to $0.9$. The algorithms and ESBAS are playing with a stationary user simulator built through Imitation Learning from real-human data. All the results are averaged over 1000 runs. The performance figures plot the curves of algorithms individual performance $\sigma^\alpha$ against the ESBAS portfolio control $\sigma^{\textsc{esbas}}$ in function of the epoch (the scale is therefore logarithmic in meta-time). The performance is the average return of the reinforcement learning return: it equals $\gamma^{|\epsilon|}R^{p_s}(s_f)$ in the negotiation game. The ratio figures plot the average algorithm selection proportions of ESBAS at each epoch. We define the \emph{relative pseudo regret} as the difference between the ESBAS absolute pseudo-regret and the absolute pseudo-regret of the best canonical meta-algorithm. All relative pseudo-regrets, as well as the gain for not having chosen the worst algorithm in the portfolio, are provided in Table \ref{tab:recap}. Relative pseudo-regrets have a 95\% confidence interval about $\pm 6 \approx \pm\num{1.5e-4}$ per trajectory.

\subsubsection{Assumptions transgressions}
\label{sup:transgressions}
Several results show that, in practice, the assumptions are transgressed. Firstly, we observe that Assumption \ref{ass:fairlearning} is transgressed. Indeed, it states that if a trajectory set is better than another for a given algorithm, then it's the same for the other algorithms. Still, this assumption infringement does not seem to harm the experimental results. It even seems to help in general: while this assumption is consistent curriculum learning, it is inconsistent with the run adaptation property advanced in Subsection \ref{sec:conclusion} that states that an algorithm might be the best on some run and another one on other runs. 

And secondly, off-policy reinforcement learning algorithms exist, but in practice, we use state space representations that distort their off-policy property~\citep{Munos2016}. However, experiments do not reveal any obvious bias related to the off/on-policiness of the trajectory set the algorithms train on.

\subsection{Q*bert experiment details}
\label{sup:qbertexp}
The three DQN networks (\emph{small}, \textit{large}, and \textit{huge}) are built in a similar fashion, with relu activations at each layer except for the output layer which is linear, with RMSprop optimizer ($\rho=0.95$ and $\epsilon=10^{-7}$), and with He uniform initialisation. The hyperparameters used for training them are also the same and equal to the ones presented in the table hereinafter:
\begin{center}
\begin{tabular}{|l|c|}
\hline
hyperparameter & value \\
\hline
minibatch size & 32 \\
replay memory size & $1\times10^6$ \\
agent history length & 4 \\
target network update frequency & $5\times10^{4}$ \\
discount factor & 0.99 \\
action repeat & 20 \\
update frequency & 20 \\
learning rate & $2.5\times10^{-4}$ \\
exploration parameter $\epsilon$ & $5\times t^{-1}\times10^{-6}$ \\
replay start size & $5\times10^{-4}$ \\
no-op max & 30 \\
\hline
\end{tabular}
\end{center}

Only their shapes differ:
\begin{itemize}
\item \textit{small} has a first convolution layer with a 4x4 kernel and a 2x2 stride, and a second convolution layer with a 4x4 kernel and a 2x2 stride, followed by a dense layer of size 128, and finally the output layer is also dense.
\item \textit{large} has a first convolution layer with a 8x8 kernel and a 4x4 stride, and a second convolution layer with a 4x4 kernel and a 2x2 stride, followed by a dense layer of size 256, and finally the output layer is also dense.
\item \textit{huge} has a first convolution layer with a 8x8 kernel and a 4x4 stride, a second convolution layer with a 4x4 kernel and a 2x2 stride, and a third convolution layer with a 3x3 kernel and a 1x1 stride, followed by a dense layer of size 512, and finally the output layer is also dense.
\end{itemize}

\clearpage

\section{Extended results of the dialogue experiments}

\begin{table}[b!]
\rowcolors{2}{gray!25}{white}
\centering
\setlength\tabcolsep{3pt}
\def\arraystretch{1}
\begin{tabular}{|l|R|Z|}
\hline
Portfolio & w. best & w. worst  \\
\hline
\emph{simple-2} + \emph{fast-2} & 35 & -181  \\
\emph{simple} + \emph{n-1-simple-2} & -73 & -131  \\
\emph{simple} + \emph{n-1-simple} & 3 & -2  \\
\emph{simple-2} + \emph{n-1-simple-2} & -12 & -38  \\
\emph{all-4} + \emph{constant-1.10} & 21 & -2032  \\
\emph{all-4} + \emph{constant-1.11} & -21 & -1414  \\
\emph{all-4} + \emph{constant-1.13} & -10 & -561  \\
\emph{all-4} & -28 & -275  \\
\emph{all-2-simple} + \emph{constant-1.08} & -41 & -2734  \\
\emph{all-2-simple} + \emph{constant-1.11} & -40 & -2013  \\
\emph{all-2-simple} + \emph{constant-1.13} & -123 & -799  \\
\emph{all-2-simple} & -90 & -121  \\
\emph{fast} + \emph{simple-2} & -39 & -256  \\
\emph{simple-2} + \emph{constant-1.01} & 169 & -5361  \\
\emph{simple-2} + \emph{constant-1.11} & 53 & -1380  \\
\emph{simple-2} + \emph{constant-1.11} & 57 & -1288  \\
\emph{simple} + \emph{constant-1.08} & 54 & -2622  \\
\emph{simple} + \emph{constant-1.10} & 88 & -1565  \\
\emph{simple} + \emph{constant-1.14} & -6 & -297  \\
\emph{all-4} + \emph{all-4-n-1} + \emph{constant-1.09} & 25 & -2308  \\
\emph{all-4} + \emph{all-4-n-1} + \emph{constant-1.11} & 20 & -1324  \\
\emph{all-4} + \emph{all-4-n-1} + \emph{constant-1.14} & -16 & -348  \\
\emph{all-4} + \emph{all-4-n-1} & -10 & -142  \\
\emph{all-2-simple} + \emph{all-2-n-1-simple} & -80 & -181  \\
4*\emph{n-2-simple} & -20 & -20  \\
4*\emph{n-3-simple} & -13 & -13  \\
8*\emph{n-1-simple-2} & -22 & -22  \\
\emph{simple-2} + \emph{constant-0.97} (no reset) & 113 & -7131  \\
\emph{simple-2} + \emph{constant-1.05} (no reset) & 23 & -3756  \\
\emph{simple-2} + \emph{constant-1.09} (no reset) & -19 & -2170  \\
\emph{simple-2} + \emph{constant-1.13} (no reset) & -16 & -703  \\
\emph{simple-2} + \emph{constant-1.14} (no reset) & -125 & -319 \\
\hline
\end{tabular}
\def\arraystretch{1.5}
  \caption{ESBAS pseudo-regret after 12 epochs (\emph{i.e.} 40,920 trajectories) compared with the best and the worst algorithms in the portfolio, in function of the algorithms in the portfolio (described in the first column). The '+' character is used to separate the algorithms. \emph{all-4} means all the four learning algorithms described in Section \ref{sup:algos} \emph{simple} + \emph{fast} + \emph{simple-2} + \emph{fast-2}. \emph{all-4-n-1} means the same four algorithms with one additional feature of noise. Finally, \emph{all-2-simple} means \emph{simple} + \emph{simple-2} and \emph{all-2-n-1-simple} means \emph{n-1-simple} + \emph{n-1-simple-2}. On the second column, the redder the colour, the worse ESBAS is achieving in comparison with the best algorithm. Inversely, the greener the colour of the number, the better ESBAS is achieving in comparison with the best algorithm. If the number is neither red nor green, it means that the difference between the portfolio and the best algorithm is insignificant and that they are performing as good. This is already an achievement for ESBAS to be as good as the best. On the third column, the bluer the cell, the weaker is the worst algorithm in the portfolio. One can notice that positive regrets are always triggered by a very weak worst algorithm in the portfolio. In these cases, ESBAS did not allow to outperform the best algorithm in the portfolio, but it can still be credited with the fact it dismissed efficiently the very weak algorithms in the portfolio.}
  \label{tab:recap}
\end{table}

\clearpage

\section{Not worse than the worst}
\label{sup:proof1}
\notthm*
\begin{proof}
From Definition \ref{def:abs}:

\begin{dgroup}
\begin{dmath}
\overline{\rho}^\sigma_{abs}(T) = T\mathbb{E}\mu^*_\infty -  \mathbb{E}_\sigma\left[\sum_{\tau=1}^T\mathbb{E}\mu_{\mathcal{D}_{\tau-1}^\sigma}^{\sigma(\tau)} \right],
\end{dmath}
\begin{dmath}
\overline{\rho}^\sigma_{abs}(T) = T\mathbb{E}\mu^*_\infty -  \sum_{\alpha\in\mathcal{P}} \mathbb{E}_\sigma\left[\sum_{i=1}^{|sub^\alpha(\mathcal{D}_{T}^\sigma)|}\mathbb{E}\mu_{\mathcal{D}_{\tau_i^\alpha-1}^\sigma}^\alpha \right],
\end{dmath}
\begin{dmath}
\overline{\rho}^\sigma_{abs}(T) = \sum_{\alpha\in\mathcal{P}} \mathbb{E}_\sigma\left[|sub^\alpha(\mathcal{D}_{T}^\sigma)|\mathbb{E}\mu^*_\infty -  \sum_{i=1}^{|sub^\alpha(\mathcal{D}_{T}^\sigma)|}\mathbb{E}\mu_{\mathcal{D}_{\tau_i^\alpha-1}^\sigma}^\alpha \right],
\end{dmath}
\end{dgroup}
where $sub^\alpha(\mathcal{D})$ is the subset of $\mathcal{D}$ with all the trajectories generated with algorithm $\alpha$, where $\tau_i^\alpha$ is the index of the $i$\textsuperscript{th} trajectory generated with algorithm $\alpha$, and where $|S|$ is the cardinality of finite set $S$. By convention, let us state that $\mathbb{E}\mu_{\mathcal{D}_{\tau_i^\alpha-1}^\sigma}^\alpha = \mathbb{E}\mu^*_\infty$ if $|sub^\alpha(\mathcal{D}_{T}^\sigma)| < i$. Then:
\begin{dmath}
\overline{\rho}^\sigma_{abs}(T) = \sum_{\alpha\in\mathcal{P}} \sum_{i=1}^T \mathbb{E}_\sigma\left[\mathbb{E}\mu^*_\infty -  \mathbb{E}\mu_{\mathcal{D}_{\tau_i^\alpha-1}^\sigma}^\alpha \right].
\end{dmath}

To conclude, let us prove by mathematical induction the following inequality:
\begin{dmath*}
\mathbb{E}_\sigma\left[\mathbb{E}\mu_{\mathcal{D}_{\tau_i^\alpha-1}^\sigma}^\alpha\right]
\geq \mathbb{E}_{\sigma^\alpha}\left[\mathbb{E}\mu_{\mathcal{D}_{i-1}^{\sigma^\alpha}}^\alpha \right]
\end{dmath*}
is true by vacuity for $i=0$: both left and right terms equal $\mathbb{E}\mu_{\emptyset}^\alpha$. Now let us assume the property true for $i$ and prove it for $i+1$:
\begin{dgroup}
\begin{dmath}
\mathbb{E}_\sigma\left[\mathbb{E}\mu_{\mathcal{D}_{\tau_{i+1}^\alpha-1}^\sigma}^\alpha\right] = \mathbb{E}_\sigma\left[\mathbb{E}\mu_{\mathcal{D}^\sigma_{\tau_{i}^\alpha-1} \cup \varepsilon^\alpha \cup \left(\mathcal{D}^\sigma_{\tau_{i+1}^\alpha-1} \setminus \mathcal{D}^\sigma_{\tau_{i}^\alpha}\right)}^\alpha \right],
\end{dmath}
\begin{dmath}
\mathbb{E}_\sigma\left[\mathbb{E}\mu_{\mathcal{D}_{\tau_{i+1}^\alpha-1}^\sigma}^\alpha\right] = \mathbb{E}_\sigma\left[\mathbb{E}\mu_{\mathcal{D}^\sigma_{\tau_{i}^\alpha-1} \cup \varepsilon^\alpha \cup \bigcup_{\tau=\tau^\alpha_{i}+1}^{\tau^\alpha_{i+1}-1}\varepsilon^{\sigma(\tau)}} \right],
\end{dmath}
\begin{dmath}
\mathbb{E}_\sigma\left[\mathbb{E}\mu_{\mathcal{D}_{\tau_{i+1}^\alpha-1}^\sigma}^\alpha\right] = \mathbb{E}_\sigma\left[\mathbb{E}\mu_{\mathcal{D}^\sigma_{\tau_{i}^\alpha-1} \cup \varepsilon^\alpha \cup \bigcup_{\tau=1}^{\tau^\alpha_{i+1}-\tau^\alpha_{i}-1}\varepsilon^{\sigma(\tau^\alpha_{i}+\tau)}}\right].
\end{dmath}
\end{dgroup}

If $|sub^\alpha(\mathcal{D}_{T}^\sigma)| \geq i+1$, by applying mathematical induction assumption, then by applying Assumption \ref{ass:compatibility} and  finally by applying Assumption \ref{ass:monotony} recursively, we infer that:
\begin{dgroup}
\begin{dmath}
\mathbb{E}_\sigma\left[\mathbb{E}\mu_{\mathcal{D}_{\tau_i^\alpha-1}^\sigma}^\alpha\right]
\geq \mathbb{E}_{\sigma^\alpha}\left[\mathbb{E}\mu_{\mathcal{D}_{i-1}^{\sigma^\alpha}}^\alpha \right],
\end{dmath}
\begin{dmath}
\mathbb{E}_\sigma\left[\mathbb{E}\mu_{\mathcal{D}_{\tau_i^\alpha-1}^\sigma \cup \varepsilon^\alpha}^\alpha\right]
\geq \mathbb{E}_{\sigma^\alpha}\left[\mathbb{E}\mu_{\mathcal{D}_{i-1}^{\sigma^\alpha}\cup \varepsilon^\alpha}^\alpha \right],
\end{dmath}
\begin{dmath}
\mathbb{E}_\sigma\left[\mathbb{E}\mu_{\mathcal{D}_{\tau_i^\alpha-1}^\sigma \cup \varepsilon^\alpha}^\alpha\right]
\geq \mathbb{E}_{\sigma^\alpha}\left[\mathbb{E}\mu_{\mathcal{D}_{i}^{\sigma^\alpha}}^\alpha \right],
\end{dmath}
\begin{dmath}
 \mathbb{E}_\sigma\left[\mathbb{E}\mu_{\mathcal{D}^\sigma_{\tau_{i}^\alpha-1} \cup \varepsilon^\alpha \cup \bigcup_{\tau=1}^{\tau^\alpha_{i+1}-\tau^\alpha_{i}-1}\varepsilon^{\sigma(\tau^\alpha_{i}+\tau)}}\right] \geq \mathbb{E}_{\sigma^\alpha}\left[\mathbb{E}\mu_{\mathcal{D}_{i}^{\sigma^\alpha}}^\alpha \right].
\end{dmath}
\end{dgroup}

If $|sub^\alpha(\mathcal{D}_{T}^\sigma)| < i+1$, the same inequality is straightforwardly obtained, since, by convention $\mathbb{E}\mu_{\mathcal{D}_{\tau_{i+1}^k}}^k = \mathbb{E}\mu^*_\infty$, and since, by definition $\forall \mathcal{D}\in\mathscr{E}^+, \forall \alpha\in\mathcal{P}, \mathbb{E}\mu^*_\infty \geq \mathbb{E}\mu^\alpha_{\mathcal{D}}$.

The mathematical induction proof is complete. This result leads to the following inequalities:
\begin{dgroup}
\begin{dmath}
\overline{\rho}^\sigma_{abs}(T) \leq \sum_{\alpha\in\mathcal{P}} \sum_{i=1}^T \mathbb{E}_{\sigma^\alpha}\left[\mathbb{E}\mu^*_\infty -  \mathbb{E}\mu_{\mathcal{D}_{i-1}^{\sigma^\alpha}}^\alpha \right],
\end{dmath}
\begin{dmath}
\overline{\rho}^\sigma_{abs}(T) \leq \sum_{\alpha\in\mathcal{P}}\overline{\rho}^{\sigma^\alpha}_{abs}\left(T\right),
\end{dmath}
\begin{dmath}
\overline{\rho}^\sigma_{abs}(T) \leq K\max_{\alpha\in\mathcal{P}}\overline{\rho}^{\sigma^\alpha}_{abs}\left(T\right),
\end{dmath}
\end{dgroup}
which leads directly to the result:
\begin{dmath}
\forall \sigma, \:\:\:\: \overline{\rho}_{abs}^\sigma(T)\in\mathcal{O}\left(\max_{\alpha^k\in\mathcal{P}} \overline{\rho}_{abs}^{\sigma^k}(T)\right).
\end{dmath}

\emph{This proof may seem to the reader rather complex for such an intuitive and loose result but algorithm selection $\sigma$ and the algorithms it selects may act tricky. For instance selecting algorithm $\alpha$ only when the collected trajectory sets contains misleading examples (\emph{i.e.} with worse expected return than with an empty trajectory set) implies that the following unintuitive inequality is always true: $\mathbb{E}\mu^\alpha_{\mathcal{D}_{\tau-1}^\sigma} \leq \mathbb{E}\mu^\alpha_{\mathcal{D}^{\sigma^\alpha}_{\tau-1}}$. In order to control all the possible outcomes, one needs to translate the selections of algorithm $\alpha$ into $\sigma^\alpha$'s view.}
\end{proof}

\clearpage

\section{ESBAS short-sighted pseudo-regret upper order of magnitude}
\label{sup:proof2}
\ssthm*

\begin{proof}
By simplification of notation, $\mathbb{E}\mu_{\mathcal{D}_\beta}^\alpha = \mathbb{E}\mu_{\mathcal{D}_{2^\beta-1}^{\sigma^{\textsc{esbas}}}}^\alpha$. 
From Definition \ref{def:ss}:
\begin{dgroup}
\begin{dmath}
\overline{\rho}^{\sigma^{\textsc{esbas}}}_{ss}(T) = \mathbb{E}_{\sigma^{\textsc{esbas}}}\left[\sum_{\tau=1}^T\left(\max_{\alpha\in\mathcal{P}} \mathbb{E}\mu_{\mathcal{D}_{\tau-1}^{\sigma^{\textsc{esbas}}}}^{\alpha} -\mathbb{E}\mu_{\mathcal{D}_{\tau-1}^{\sigma^{\textsc{esbas}}}}^{\sigma^{\textsc{esbas}}(\tau)} \right)\right],
\end{dmath}
\begin{dmath}
\overline{\rho}^{\sigma^{\textsc{esbas}}}_{ss}(T) = \mathbb{E}_{\sigma^{\textsc{esbas}}}\left[\sum_{\tau=1}^T\left(\max_{\alpha\in\mathcal{P}} \mathbb{E}\mu_{\beta_\tau}^{\alpha} -\mathbb{E}\mu_{\beta_\tau}^{\sigma^{\textsc{esbas}}(\tau)}\right) \right] ,
\end{dmath}
\begin{dmath}
\overline{\rho}^{\sigma^{\textsc{esbas}}}_{ss}(T) \leq \mathbb{E}_{\sigma^{\textsc{esbas}}}\left[\sum_{\beta=0}^{\lfloor \log_2(T) \rfloor} \sum_{\tau=2^\beta}^{2^{\beta+1}-1}\left(\max_{\alpha\in\mathcal{P}} \mathbb{E}\mu_{\beta}^{\alpha} -\mathbb{E}\mu_{\beta}^{\sigma^{\textsc{esbas}}(\tau)}\right) \right],
\end{dmath}
\begin{dmath}
\overline{\rho}^{\sigma^{\textsc{esbas}}}_{ss}(T) \leq \sum_{\beta=0}^{\lfloor \log_2(T) \rfloor} \overline{\rho}_{ss}^{\sigma^{\textsc{esbas}}}(\beta),
\label{eq:th2:epochregret}
\end{dmath}
\end{dgroup}
where $\beta_{\tau}$ is the epoch of meta-time $\tau$. A bound on short-sighted pseudo-regret $\overline{\rho}^{\sigma^{\textsc{esbas}}}_{ss}(\beta)$ for each epoch $\beta$ can then be obtained by the stochastic bandit $\Xi$ regret bounds in $\mathcal{O}\left(\log(T)/\Delta\right)$:
\begin{dgroup}
\begin{dmath}
\overline{\rho}_{ss}^{\sigma^{\textsc{esbas}}}(\beta) = \mathbb{E}_{\sigma^{\textsc{esbas}}}\left[\sum_{\tau=2^\beta}^{2^{\beta+1}-1} \left(\max_{\alpha\in\mathcal{P}} \mathbb{E}\mu_{\beta}^{\alpha} -\mathbb{E}\mu_{\beta}^{\sigma^{\textsc{esbas}}(\tau)}\right) \right] ,
\end{dmath}
\begin{dmath}
\overline{\rho}_{ss}^{\sigma^{\textsc{esbas}}}(\beta) \in \mathcal{O}\left(\cfrac{\log(2^\beta)}{\Delta_\beta}\right) , 
\end{dmath}
\begin{dmath}
\overline{\rho}_{ss}^{\sigma^{\textsc{esbas}}}(\beta) \in \mathcal{O}\left(\cfrac{\beta}{\Delta_\beta}\right) ,
\end{dmath}
\begin{dmath}
\hiderel{\Leftrightarrow} \exists \kappa_1\hiderel{>}0, \;\;\;\;\;\;\; \overline{\rho}_{ss}^{\sigma^{\textsc{esbas}}}(\beta) \leq \cfrac{\kappa_1\beta}{\Delta_\beta} ,
\end{dmath}
\begin{dmath}
\textnormal{where}\;\;\;\;\;\;\cfrac{1}{\Delta_\beta} = \sum_{\alpha\in\mathcal{P}} \cfrac{1}{\Delta_\beta^\alpha} 
\end{dmath}
\begin{dmath}
\textnormal{and where}\;\;\;\;\;\;\Delta_\beta^\alpha = \left\{
	\begin{array}{ll}
		+\infty &\textnormal{ if } \mathbb{E}\mu_\beta^\alpha = \max_{\alpha'\in\mathcal{P}} \mathbb{E}\mu_{\beta}^{\alpha'}, \\
        \max_{\alpha'\in\mathcal{P}} \mathbb{E}\mu_{\beta}^{\alpha'}-\mathbb{E}\mu_\beta^\alpha &\textnormal{ otherwise.}
	\end{array}
    \right. 
\end{dmath}
\end{dgroup}

Since we are interested in the order of magnitude, we can once again only consider the upper bound of $\frac{1}{\Delta_\beta}$:
\begin{dgroup}
\begin{dmath}
\cfrac{1}{\Delta_\beta} \in \bigcup_{\alpha\in\mathcal{P}} \mathcal{O}\left(\cfrac{1}{\Delta_\beta^\alpha}\right) ,
\end{dmath}
\begin{dmath}
\cfrac{1}{\Delta_\beta} \in \mathcal{O}\left(\max_{\alpha\in\mathcal{P}}\cfrac{1}{\Delta_\beta^\alpha}\right) ,
\end{dmath}
\begin{dmath}
\hiderel{\Leftrightarrow} \exists \kappa_2\hiderel{>}0, \;\;\;\;\;\;\; \cfrac{1}{\Delta_\beta} \leq \cfrac{\kappa_2}{\Delta_\beta^\dag}  ,
\end{dmath}
\end{dgroup}
where the second best algorithm at epoch $\beta$ such that $\Delta_\beta^\dag>0$ is noted $\alpha_\beta^\dag$. Injected in Equation \ref{eq:th2:epochregret}, it becomes:
\begin{dmath}
\overline{\rho}_{ss}^{\sigma^{\textsc{esbas}}}(T) \leq \kappa_1\kappa_2\sum_{\beta=0}^{\lfloor \log_2(T) \rfloor} \cfrac{\beta}{\Delta_\beta^\dag}, \label{eq:DeltaminBound}
\end{dmath}
which proves the result.
\end{proof}

\subsection{Corollaries of Theorem \ref{th:shortsightedregret}}
\label{sup:corollaries}

\begin{corollary}
If $\Delta_\beta^\dag  \in \Theta\left(1\right)$, then $\overline{\rho}_{ss}^{\sigma^{\textsc{esbas}}}(T)\in\mathcal{O}\left(\cfrac{\log^2(T)}{\Delta_\infty^\dag}\right)$, where $\Delta_\infty^\dag = \mu_\infty^* - \mu_\infty^\dag >0$.
\end{corollary}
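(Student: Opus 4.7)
The plan is to combine the general upper bound of Theorem~\ref{th:shortsightedregret} with an asymptotic lower bound on $\Delta_\beta^\dag$ coming from the hypothesis $\Delta_\beta^\dag\in\Theta(1)$, and then to reduce the proof to an elementary arithmetic sum.

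First I would invoke Theorem~\ref{th:shortsightedregret}, which already establishes
\begin{equation*}
\overline{\rho}^{\sigma^{\textsc{esbas}}}_{ss}(T)\in\mathcal{O}\!\left(\sum_{\beta=0}^{\lfloor \log(T)\rfloor}\frac{\beta}{\Delta_\beta^\dag}\right).
\end{equation*}
So it suffices to control this sum under the hypothesis $\Delta_\beta^\dag\in\Theta(1)$. By definition of $\Theta$, there exists a constant $c>0$ and an index $\beta_0$ such that $\Delta_\beta^\dag\ge c\,\Delta_\infty^\dag$ for all $\beta\ge\beta_0$; the finitely many terms with $\beta<\beta_0$ contribute only an additive constant (independent of $T$) and can be absorbed in the $\mathcal{O}(\cdot)$.

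Next, I would bound the tail of the sum by pulling the constant out,
\begin{equation*}
\sum_{\beta=\beta_0}^{\lfloor\log(T)\rfloor}\frac{\beta}{\Delta_\beta^\dag}\le \frac{1}{c\,\Delta_\infty^\dag}\sum_{\beta=\beta_0}^{\lfloor\log(T)\rfloor}\beta,
\end{equation*}
and then use the closed form $\sum_{\beta=0}^{N}\beta = N(N+1)/2$ with $N=\lfloor\log(T)\rfloor$ to conclude that the sum is in $\mathcal{O}(\log^2(T))$. Dividing by the constant $c\,\Delta_\infty^\dag$ gives the announced order of magnitude $\mathcal{O}(\log^2(T)/\Delta_\infty^\dag)$.

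There is essentially no real obstacle: all the heavy lifting is done by Theorem~\ref{th:shortsightedregret}, and the remaining step is a routine asymptotic manipulation. The only point that deserves a line of justification is that $\Delta_\infty^\dag>0$ (so that division is legitimate), which is guaranteed by the definition of the $\Theta(1)$ assumption on $\Delta_\beta^\dag$ together with $\Delta_\infty^\dag=\mu_\infty^*-\mu_\infty^\dag$.
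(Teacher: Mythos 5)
Your proposal is correct and follows essentially the same route as the paper: both invoke the bound $\mathcal{O}\bigl(\sum_{\beta}\beta/\Delta_\beta^\dag\bigr)$ from Theorem~\ref{th:shortsightedregret}, split the sum at a finite epoch index beyond which the gap is bounded below (the paper uses $\Delta_\beta^\dag>\Delta_\infty^\dag-\epsilon$ where you use $\Delta_\beta^\dag\ge c\,\Delta_\infty^\dag$), absorb the head of the sum into a constant, and bound the tail by $\sum_{\beta\le\lfloor\log T\rfloor}\beta\in\mathcal{O}(\log^2 T)$ divided by that constant.
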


\begin{proof}
$\Delta_\beta^\dag \in \Omega\left(1\right)$ means that only one algorithm $\alpha^*$ converges to the optimal asymptotic performance $\mu_\infty^*$ and that $\exists \Delta_\infty^\dag = \mu_\infty^* - \mu_\infty^\dag >0$ such that $\forall \epsilon_2>0$, $\exists \beta_1\in\mathbb{N}$, such that $\forall \beta \geq \beta_1$, $\Delta_\beta^\dag > \Delta_\infty^\dag - \epsilon$. In this case, the following bound can be deduced from equation \ref{eq:DeltaminBound}:
\begin{dgroup}
\begin{dmath}
\overline{\rho}_{ss}^{\sigma^{\textsc{esbas}}}(T) \leq \kappa_4 + \sum_{\beta=\beta_1}^{\lfloor \log(T) \rfloor} \cfrac{\kappa_1\kappa_2}{\Delta_\infty^\dag - \epsilon}\beta ,
\end{dmath}
\begin{dmath}
\overline{\rho}_{ss}^{\sigma^{\textsc{esbas}}}(T) \leq \kappa_4 + \cfrac{\kappa_1\kappa_2 \log^2(T)}{2(\Delta_\infty^\dag - \epsilon)} \label{eq:cor3:result},
\end{dmath}
\end{dgroup}
where $\kappa_4$ is a constant equal to the short-sighted pseudo-regret before epoch $\beta_1$:
\begin{dmath}
\kappa_4 = \overline{\rho}_{ss}^{\sigma^{\textsc{esbas}}}\left(2^{\beta_1-1}\right)
\end{dmath}

Equation \ref{eq:cor3:result} directly leads to the corollary.
\end{proof}

\begin{corollary}
If $\Delta_\beta^\dag \in \Theta\left(\beta^{-m^\dag}\right)$, then $\overline{\rho}_{ss}^{\sigma^{\textsc{esbas}}}(T)\in\mathcal{O}\left(\log^{m^\dag+2}(T)\right)$.
\end{corollary}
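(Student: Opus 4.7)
The plan is to invoke Theorem \ref{th:shortsightedregret} as a black box, substitute the hypothesised decay rate of the gap into the resulting sum, and then reduce everything to the standard asymptotic of a power sum. Concretely, Theorem \ref{th:shortsightedregret} gives
\begin{equation*}
\overline{\rho}_{ss}^{\sigma^{\textsc{esbas}}}(T) \in \mathcal{O}\!\left(\sum_{\beta=0}^{\lfloor \log_2(T) \rfloor} \frac{\beta}{\Delta_\beta^\dag}\right),
\end{equation*}
so the only work is to estimate this sum under the assumption $\Delta_\beta^\dag \in \Theta(\beta^{-m^\dag})$.

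First I would translate the hypothesis into an explicit inequality: there exist $\beta_0 \in \mathbb{N}$ and a constant $\kappa>0$ such that $1/\Delta_\beta^\dag \leq \kappa\, \beta^{m^\dag}$ for all $\beta \geq \beta_0$. Contributions from $\beta<\beta_0$ form a finite constant that is absorbed into the $\mathcal{O}$-notation (mirroring the role of $\kappa_4$ in the preceding corollary). This handles the low-$\beta$ regime cleanly and isolates the asymptotic tail.

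Next I would bound the tail sum by a power sum: $\sum_{\beta=\beta_0}^{\lfloor \log_2(T) \rfloor} \beta \cdot \beta^{m^\dag} = \sum_{\beta=\beta_0}^{\lfloor \log_2(T) \rfloor} \beta^{m^\dag+1}$, and then apply the standard integral comparison $\sum_{\beta=1}^{N} \beta^{p} \leq \int_{0}^{N+1} x^{p}\, dx = \frac{(N+1)^{p+1}}{p+1}$ with $p = m^\dag+1$ and $N = \lfloor \log_2(T) \rfloor$. This yields a bound of order $\log_2^{m^\dag+2}(T)$, which, since $\log_2(T)$ and $\log(T)$ differ only by a multiplicative constant, lies in $\mathcal{O}\!\left(\log^{m^\dag+2}(T)\right)$ as required.

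There is essentially no hard step: the only mildly delicate point is ensuring $m^\dag$ is treated as a non-negative constant (if $m^\dag<0$ the gaps grow and the statement is weaker than Corollary 1, while for $m^\dag\geq 0$ the integral comparison is valid and tight up to constants). Once that observation is in place, the bound follows by a one-line substitution into the expression provided by Theorem \ref{th:shortsightedregret}, exactly parallel to the argument given for the $\Theta(1)$ case in the preceding corollary.
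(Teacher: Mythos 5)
Your proposal is correct and follows essentially the same route as the paper's own proof: both invoke Theorem \ref{th:shortsightedregret}, replace $1/\Delta_\beta^\dag$ by $\mathcal{O}(\beta^{m^\dag})$ for $\beta$ beyond some threshold epoch, absorb the early epochs into a constant, and bound the resulting power sum $\sum_{\beta}^{\lfloor\log T\rfloor}\beta^{m^\dag+1}$ by $\mathcal{O}(\log^{m^\dag+2}(T))$. Your explicit integral comparison and the remark that $m^\dag$ must be non-negative are, if anything, slightly more careful than the paper's version.
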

\begin{proof}
If $\Delta_\beta^\dag$ decreases slower than polynomially in epochs, which implies decreasing polylogarithmically in meta-time, \emph{i.e.} $\exists \kappa_5>0, \exists m^\dag>0$, $\exists \beta_2\in\mathbb{N}$, such that $\forall \beta\geq\beta_2$, $\Delta_\beta^\dag > \kappa_5\beta^{-m^\dag}$, then, from Equation \ref{eq:DeltaminBound}:
\begin{dgroup}
\begin{dmath}
\overline{\rho}_{ss}^{\sigma^{\textsc{esbas}}}(T) \leq \kappa_6 + \sum_{\beta=\beta_2}^{\lfloor \log(T) \rfloor} \cfrac{\kappa_1\kappa_2}{\kappa_5\beta^{-m^\dag}}\beta ,
\end{dmath}
\begin{dmath}
\overline{\rho}_{ss}^{\sigma^{\textsc{esbas}}}(T) \leq \kappa_6 + \sum_{\beta=\beta_2}^{\lfloor \log(T) \rfloor} \cfrac{\kappa_1\kappa_2}{\kappa_5}\beta^{m^\dag+1} ,
\end{dmath}
\begin{dmath}
\overline{\rho}_{ss}^{\sigma^{\textsc{esbas}}}(T) \leq \cfrac{\kappa_1\kappa_2}{\kappa_5} \log^{m^\dag+2}(T), \label{eq:cor4:result}
\end{dmath}
\end{dgroup}
where $\kappa_6$ is a constant equal to the short-sighted pseudo-regret before epoch $\beta_2$:
\begin{dmath}
\kappa_6 = \overline{\rho}_{ss}^{\sigma^{\textsc{esbas}}}\left(2^{\beta_2-1}\right).
\end{dmath}

Equation \ref{eq:cor4:result} directly leads to the corollary.
\end{proof}

\begin{corollary}
If $\Delta_\beta^\dag  \in \Theta\left(T^{-c^\dag}\right)$, then $\overline{\rho}_{ss}^{\sigma^{\textsc{esbas}}}(T)\in\mathcal{O}\left(T^{c^\dag}\log(T)\right)$.
\end{corollary}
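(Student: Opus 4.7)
The starting point will be the bound from Theorem~\ref{th:shortsightedregret}, namely
\begin{equation*}
\overline{\rho}_{ss}^{\sigma^{\textsc{esbas}}}(T) \leq \kappa_1\kappa_2\sum_{\beta=0}^{\lfloor \log_2(T) \rfloor} \cfrac{\beta}{\Delta_\beta^\dag},
\end{equation*}
which reduces the corollary to evaluating the right-hand side under the hypothesis $\Delta_\beta^\dag \in \Theta(T^{-c^\dag})$. The first step is to make the hypothesis precise: since epoch $\beta$ starts at meta-time $2^\beta$, the natural reading is $\Delta_\beta^\dag \in \Theta\bigl(2^{-\beta c^\dag}\bigr)$, so there exist constants $\kappa_7>0$ and $\beta_3\in\mathbb{N}$ with $1/\Delta_\beta^\dag \leq \kappa_7 \, 2^{\beta c^\dag}$ for every $\beta\geq \beta_3$. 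Absorbing the finitely many early epochs into a constant $\kappa_8 = \overline{\rho}_{ss}^{\sigma^{\textsc{esbas}}}(2^{\beta_3-1})$, the bound becomes
\begin{equation*}
\overline{\rho}_{ss}^{\sigma^{\textsc{esbas}}}(T) \;\leq\; \kappa_8 \;+\; \kappa_1\kappa_2\kappa_7 \sum_{\beta=\beta_3}^{\lfloor \log_2(T) \rfloor} \beta \, 2^{\beta c^\dag}.
\end{equation*}

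The main technical step is then to evaluate the arithmetico-geometric sum $S_N = \sum_{\beta=0}^{N}\beta \, r^\beta$ with $r=2^{c^\dag}>1$. Here I would use the standard closed form, or more cheaply the observation that for $r>1$ one has $S_N \leq \frac{N r^{N+1}}{r-1}$, giving $S_N \in \mathcal{O}(N r^N)$. Setting $N = \lfloor \log_2(T) \rfloor$ produces $r^N = 2^{c^\dag \lfloor \log_2 T \rfloor} \leq T^{c^\dag}$ and $N \leq \log_2(T)$, so the full sum lies in $\mathcal{O}(T^{c^\dag}\log T)$, which matches the claim after absorbing constants.

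The only subtlety I expect is the convention on $T$ inside the expression $\Delta_\beta^\dag \in \Theta(T^{-c^\dag})$: one must fix once and for all that the $T$ in the hypothesis refers to the meta-time at the start of epoch $\beta$, i.e.\ $2^\beta$, so that the decay of the gap can be expressed in terms of $\beta$ and slotted into Theorem~\ref{th:shortsightedregret}. Once that bookkeeping is clear, the proof is essentially a one-line application of the geometric-sum estimate, exactly parallel to the two preceding corollaries, so no further obstacle is anticipated.
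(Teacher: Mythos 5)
Your proposal is correct and follows essentially the same route as the paper: both interpret the hypothesis as $\Delta_\beta^\dag \in \Theta\bigl(2^{-\beta c^\dag}\bigr)$, absorb the finitely many early epochs into a constant, and reduce the claim to bounding $\sum_\beta \beta\,(2^{c^\dag})^\beta$ by $\mathcal{O}(T^{c^\dag}\log T)$. The only difference is that the paper evaluates this arithmetico-geometric sum via its exact closed form while you use the cruder but sufficient estimate $S_N \leq N r^{N+1}/(r-1)$, which is an immaterial simplification.
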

\begin{proof}
If $\Delta_\beta^\dag$ decreases slower than a fractional power of meta-time $T$, then $\exists \kappa_7>0$, $0<c^\dag<1$, $\exists \beta_3\in\mathbb{N}$, such that $\forall \beta\geq\beta_3$, $\Delta_\beta^\dag > \kappa_7 T^{-c^\dag}$, and therefore, from Equation \ref{eq:DeltaminBound}:
\begin{dgroup}
\begin{dmath}
\overline{\rho}_{ss}^{\sigma^{\textsc{esbas}}}(T) \leq \kappa_8 + \sum_{\beta=\beta_3}^{\lfloor \log(T) \rfloor} \cfrac{\kappa_1\kappa_2}{\kappa_7\tau^{-c^\dag}}\beta ,
\end{dmath}
\begin{dmath}
\overline{\rho}_{ss}^{\sigma^{\textsc{esbas}}}(T) \leq \kappa_8 + \sum_{\beta=\beta_3}^{\lfloor \log(T) \rfloor} \cfrac{\kappa_1\kappa_2}{\kappa_7 \left(2^\beta\right)^{-c^\dag}}\beta ,
\end{dmath}
\begin{dmath}
\overline{\rho}_{ss}^{\sigma^{\textsc{esbas}}}(T) \leq \kappa_8 + \sum_{\beta=\beta_3}^{\lfloor \log(T) \rfloor} \cfrac{\kappa_1\kappa_2}{\kappa_7}\beta\left(2^{c^\dag}\right)^\beta, \label{eq:sumxex}
\end{dmath}
\end{dgroup}
where $\kappa_8$ is a constant equal to the short-sighted pseudo-regret before epoch $\beta_3$:
\begin{dmath}
\kappa_8 = \overline{\rho}_{ss}^{\sigma^{\textsc{esbas}}}\left(2^{\beta_3-1}\right).
\end{dmath}

The sum in Equation \ref{eq:sumxex} is solved as follows:
\begin{dgroup}
\begin{dmath}
\sum_{i=i_0}^n ix^i = x\sum_{i=i_0}^n ix^{i-1} ,
\end{dmath}
\begin{dmath}
\sum_{i=i_0}^n ix^i = x\sum_{i=i_0}^n \cfrac{d\left(x^{i}\right)}{dx} ,
\end{dmath}
\begin{dmath}
\sum_{i=i_0}^n ix^i = x\cfrac{d\left(\sum_{i=i_0}^n x^{i}\right)}{dx} ,
\end{dmath}
\begin{dmath}
\sum_{i=i_0}^n ix^i = x\cfrac{d\left(\cfrac{x^{n+1}-x^{i_0}}{x-1}\right)}{dx} ,
\end{dmath}
\begin{dmath}
\sum_{i=i_0}^n ix^i = \cfrac{x}{(x-1)^2}\left((x-1)nx^n - x^n -(x-1)i_0x^{i_0-1} + x^{i_0}\right).
\end{dmath}
\end{dgroup}

This result, injected in Equation \ref{eq:sumxex}, induces that $\forall \epsilon_3>0$, $\exists T_1\in\mathbb{N}$, $\forall T\geq T_1$:
\begin{dgroup}
\begin{dmath}
\overline{\rho}_{ss}^{\sigma^{\textsc{esbas}}}(T) \leq \kappa_8 + \cfrac{\kappa_1\kappa_2(1+\epsilon')2^{c^\dag}}{\kappa_7(2^{c^\dag}-1)}\log(T) 2^{c^\dag \log(T) }  ,
\end{dmath}
\begin{dmath}
\overline{\rho}_{ss}^{\sigma^{\textsc{esbas}}}(T) \leq \kappa_8 + \cfrac{\kappa_1\kappa_2(1+\epsilon')2^{c^\dag}}{\kappa_7(2^{c^\dag}-1)} T^{c^\dag} \log(T),
\end{dmath}
\end{dgroup}
which proves the corollary.
\end{proof}

\clearpage
\section{ESBAS abolute pseudo-regret bound}
\label{sup:proof3}
\absthm*

\begin{proof}
The ESBAS absolute pseudo-regret is written with the following notation simplifications : $\mathcal{D}_{\tau-1}=\mathcal{D}_{\tau-1}^{\sigma^{\textsc{esbas}}}$ and $k_\tau=\sigma^{\textsc{esbas}}(\tau)$:
\begin{dgroup}
\begin{dmath}
\overline{\rho}^{\sigma^{\textsc{esbas}}}_{abs}(T) = T\mathbb{E}\mu^*_\infty -  \mathbb{E}_{\sigma^{\textsc{esbas}}}\left[\sum_{\tau=1}^T\mathbb{E}\mu_{\mathcal{D}_{\tau-1}^{\sigma^{\textsc{esbas}}}}^{\sigma^{\textsc{esbas}}(\tau)} \right],
\end{dmath}
\begin{dmath}
\overline{\rho}^{\sigma^{\textsc{esbas}}}_{abs}(T) = T\mathbb{E}\mu^*_\infty -  \mathbb{E}_{\sigma^{\textsc{esbas}}}\left[\sum_{\tau=1}^T\mathbb{E}\mu_{\mathcal{D}_{\tau-1}}^{k_\tau} \right].
\end{dmath}
\end{dgroup}

Let $\sigma^*$ denote the algorithm selection selecting exclusively $\alpha^*$, and $\alpha^*$ be the algorithm minimising the algorithm absolute pseudo-regret:
\begin{dmath}
\alpha^* = \argmin_{\alpha^k\in\mathcal{P}}\overline{\rho}^{\sigma^k}_{abs}(T).
\end{dmath}

Note that $\sigma^*$ is the optimal constant algorithm selection at horizon $T$, but it is not necessarily the optimal algorithm selection: there might exist, and there probably exists a non constant algorithm selection yielding a smaller pseudo-regret.

The ESBAS absolute pseudo-regret $\overline{\rho}^{\sigma^{\textsc{esbas}}}_{abs}(T)$ can be decomposed into the pseudo-regret for not having followed the optimal constant algorithm selection $\sigma^*$ and the pseudo-regret for not having selected the algorithm with the highest return, \emph{i.e.} between the pseudo-regret on the trajectory and the pseudo-regret on the immediate optimal return:
\begin{dmath}
\overline{\rho}^{\sigma^{\textsc{esbas}}}_{abs}(T) = 
T\mathbb{E}\mu^*_\infty -
\mathbb{E}_{\sigma^{\textsc{esbas}}}\left[\sum_{\tau=1}^T \mathbb{E}\mu_{sub^*\left(\mathcal{D}_{\tau-1}\right)}^* \right] +
\mathbb{E}_{\sigma^{\textsc{esbas}}}\left[\sum_{\tau=1}^T \mathbb{E}\mu_{sub^*\left(\mathcal{D}_{\tau-1}\right)}^* \right] -
\mathbb{E}_{\sigma^{\textsc{esbas}}}\left[\sum_{\tau=1}^T \mathbb{E}\mu_{\mathcal{D}_{\tau-1}}^{k_\tau} \right], \label{eq:classicaldecomposition}
\end{dmath}
where $\mathbb{E}\mu_{sub^*\left(\mathcal{D}_{\tau-1}\right)}^*$ is the expected return of policy $\pi_{sub^*\left(\mathcal{D}_{\tau-1}\right)}^*$, learnt by algorithm $\alpha^*$ on trajectory set $sub^*\left(\mathcal{D}_{\tau-1}\right)$, which is the trajectory subset of $\mathcal{D}_{\tau-1}$ obtained by removing all trajectories that were not generated with algorithm $\alpha^*$.

First line of Equation \ref{eq:classicaldecomposition} can be rewritten as follows:
\begin{dmath}
T\mathbb{E}\mu^*_\infty -
\mathbb{E}_{\sigma^{\textsc{esbas}}}\left[\sum_{\tau=1}^T \mathbb{E}\mu_{sub^*\left(\mathcal{D}_{\tau-1}\right)}^* \right] =  \sum_{\tau=1}^T\left(\mathbb{E}\mu^*_\infty -
\mathbb{E}_{\sigma^{\textsc{esbas}}}\left[ \mathbb{E}\mu_{sub^*\left(\mathcal{D}_{\tau-1}\right)}^* \right]\right).
\label{eq:firstTermsumTout}
\end{dmath}

The key point in Equation \ref{eq:firstTermsumTout} is to evaluate the size of $sub^*\left(\mathcal{D}_{\tau-1}\right)$. 

On the one side, Assumption \ref{ass:fairlearning} of fairness states that one algorithm learns as fast as any another over any history. The asymptotically optimal algorithm(s) when $\tau \rightarrow \infty$ is(are) therefore the same one(s) whatever the the algorithm selection is. On the other side, let $1-\delta_\tau$ denote the probability, that at time $\tau$, the following inequality is true:
\begin{dmath}
 |sub^*\left(\mathcal{D}_{\tau-1}\right)| \geq \bigg\lfloor\cfrac{\tau-1}{3K}\bigg\rfloor.
\label{eq:subsize}
\end{dmath}

With probability $\delta_\tau$, inequality \ref{eq:subsize} is not guaranteed and nothing can be inferred about $\mathbb{E}\mu_{sub^*\left(\mathcal{D}_{\tau-1}\right)}^*$, except it is bounded under by $R_{min}/(1-\gamma)$. Let $\mathscr{E}^{\tau-1}_{3K}$ be the subset of $\mathscr{E}^{\tau-1}$ such that $\forall \mathcal{D}\in\mathscr{E}^{\tau-1}_{3K}, |sub^*(\mathcal{D})|\geq \lfloor(\tau-1)/3K\rfloor$. Then, $\delta_\tau$ can be expressed as follows:
\begin{dmath}
 \delta_\tau = \sum_{\mathcal{D}\in\mathscr{E}^{\tau-1}\setminus\mathscr{E}^{\tau-1}_{3K}}\mathbb{P}(\mathcal{D}|\sigma^{\textsc{esbas}}).
\label{eq:delta}
\end{dmath}

With these new notations:
\begin{dgroup}
\begin{dmath}
\mathbb{E}\mu^*_\infty -
\mathbb{E}_{\sigma^{\textsc{esbas}}}\left[ \mathbb{E}\mu_{sub^*\left(\mathcal{D}_{\tau-1}\right)}^* \right]
\leq 
\mathbb{E}\mu^*_\infty -
\sum_{\mathcal{D}\in\mathscr{E}^{\tau-1}_{3K}}\mathbb{P}(\mathcal{D}|\sigma^{\textsc{esbas}})\mathbb{E}\mu_{sub^*\left(\mathcal{D}\right)}^* -
\delta_{\tau}\cfrac{R_{min}}{1-\gamma}\;,
\end{dmath}
\begin{dmath}
\mathbb{E}\mu^*_\infty -
\mathbb{E}_{\sigma^{\textsc{esbas}}}\left[ \mathbb{E}\mu_{sub^*\left(\mathcal{D}_{\tau-1}\right)}^* \right]
\leq 
{(1-\delta_{\tau})\mathbb{E}\mu^*_\infty -
\sum_{\mathcal{D}\in\mathscr{E}^{\tau-1}_{3K}}\mathbb{P}(\mathcal{D}|\sigma^{\textsc{esbas}})\mathbb{E}\mu_{sub^*\left(\mathcal{D}\right)}^* +
\delta_{\tau}\left(\mathbb{E}\mu^*_\infty -\cfrac{R_{min}}{1-\gamma}\right)}.
\label{eq:deltaseparation}
\end{dmath}
\end{dgroup}

Let consider $\mathscr{E}^*(\alpha,N)$ the set of all sets $\mathcal{D}$ such that $|sub^\alpha(\mathcal{D})|=N$ and such that last trajectory in $\mathcal{D}$ was generated by $\alpha$. Since ESBAS, with $\Xi$, a stochastic bandit with regret in $\mathcal{O}(\log(T)/\Delta)$, guarantees that all algorithms will eventually be selected an infinity of times, we know that :
\begin{dmath}
\forall \alpha\hiderel{\in}\mathcal{P}, \; \forall N\hiderel{\in}\mathbb{N}, \;\;\;\;\;\; \sum_{\mathcal{D}\in\mathscr{E}^+(\alpha,N)}\mathbb{P}(\mathcal{D}|\sigma^{\textsc{esbas}}) = 1.
\end{dmath}

By applying recursively Assumption \ref{ass:compatibility}, one demonstrates that:
\begin{dgroup}
\begin{dmath}
\sum_{\mathcal{D}\in\mathscr{E}^+(\alpha,N)}\mathbb{P}(\mathcal{D}|\sigma^{\textsc{esbas}})\mathbb{E}\mu_{sub^\alpha\left(\mathcal{D}\right)}^\alpha \geq \sum_{\mathcal{D}\in\mathscr{E}^N}\mathbb{P}(\mathcal{D}|\sigma^\alpha)\mathbb{E}\mu_{\mathcal{D}}^\alpha,
\end{dmath}
\begin{dmath}
\sum_{\mathcal{D}\in\mathscr{E}^+(\alpha,N)}\mathbb{P}(\mathcal{D}|\sigma^{\textsc{esbas}})\mathbb{E}\mu_{sub^\alpha\left(\mathcal{D}\right)}^\alpha \geq \mathbb{E}_{\sigma^\alpha}\left[\mathbb{E}\mu_{\mathcal{D}_N^{\sigma^\alpha}}^\alpha\right].
\label{eq:sigmaTrans}
\end{dmath}
\end{dgroup}

One also notices the following piece-wisely domination from applying recursively Assumption \ref{ass:monotony}:
\begin{dgroup}
\begin{dmath}
(1-\delta_{\tau})\mathbb{E}\mu^*_\infty -
\sum_{\mathcal{D}\in\mathscr{E}^{\tau-1}_{3K}}\mathbb{P}(\mathcal{D}|\sigma^{\textsc{esbas}})\mathbb{E}\mu_{sub^*\left(\mathcal{D}\right)}^*
= 
\sum_{\mathcal{D}\in\mathscr{E}^{\tau-1}_{3K}}\mathbb{P}(\mathcal{D}|\sigma^{\textsc{esbas}})\left(\mathbb{E}\mu^*_\infty - \mathbb{E}\mu_{sub^*\left(\mathcal{D}\right)}^*\right),
\label{eq:piecewisedom1}
\end{dmath}
\begin{dmath}
(1-\delta_{\tau})\mathbb{E}\mu^*_\infty -
\sum_{\mathcal{D}\in\mathscr{E}^{\tau-1}_{3K}}\mathbb{P}(\mathcal{D}|\sigma^{\textsc{esbas}})\mathbb{E}\mu_{sub^*\left(\mathcal{D}\right)}^*
\leq 
\sum_{\mathcal{D}\in\mathscr{E}^+(\alpha^*,\lfloor\frac{\tau-1}{3K}\rfloor) \& |\mathcal{D}|\leq\tau-1}\mathbb{P}(\mathcal{D}|\sigma^{\textsc{esbas}})\left(\mathbb{E}\mu^*_\infty - \mathbb{E}\mu_{sub^*\left(\mathcal{D}\right)}^*\right),
\label{eq:piecewisedom2}
\end{dmath}
\begin{dmath}
(1-\delta_{\tau})\mathbb{E}\mu^*_\infty -
\sum_{\mathcal{D}\in\mathscr{E}^{\tau-1}_{3K}}\mathbb{P}(\mathcal{D}|\sigma^{\textsc{esbas}})\mathbb{E}\mu_{sub^*\left(\mathcal{D}\right)}^*
\leq 
\sum_{\mathcal{D}\in\mathscr{E}^+(\alpha^*,\lfloor\frac{\tau-1}{3K}\rfloor)}\mathbb{P}(\mathcal{D}|\sigma^{\textsc{esbas}})\left(\mathbb{E}\mu^*_\infty - \mathbb{E}\mu_{sub^*\left(\mathcal{D}\right)}^*\right),
\label{eq:piecewisedom2bis}
\end{dmath}
\begin{dmath}
(1-\delta_{\tau})\mathbb{E}\mu^*_\infty -
\sum_{\mathcal{D}\in\mathscr{E}^{\tau-1}_{3K}}\mathbb{P}(\mathcal{D}|\sigma^{\textsc{esbas}})\mathbb{E}\mu_{sub^*\left(\mathcal{D}\right)}^*
\leq 
\mathbb{E}\mu^*_\infty -
\sum_{\mathcal{D}\in\mathscr{E}^+(\alpha^*,\lfloor\frac{\tau-1}{3K}\rfloor)}\mathbb{P}(\mathcal{D}|\sigma^{\textsc{esbas}})\mathbb{E}\mu_{sub^*\left(\mathcal{D}\right)}^*.
\label{eq:piecewisedom3}
\end{dmath}
\end{dgroup}

Then, by applying results from Equations \ref{eq:sigmaTrans} and \ref{eq:piecewisedom3} into Equation \ref{eq:deltaseparation}, one obtains:
\begin{dmath}
\mathbb{E}\mu^*_\infty -
\mathbb{E}_{\sigma^{\textsc{esbas}}}\left[ \mathbb{E}\mu_{sub^*\left(\mathcal{D}_{\tau-1}\right)}^* \right]
\leq 
\mathbb{E}\mu^*_\infty -
\mathbb{E}_{\sigma^*}\left[\mathbb{E}\mu_{\mathcal{D}_{\big\lfloor\frac{\tau-1}{3K}\big\rfloor}^{\sigma^*}}^\alpha\right] +
\delta_{\tau}\left(\mathbb{E}\mu^*_\infty -\cfrac{R_{min}}{1-\gamma}\right).
\end{dmath}

Next, the terms in the first line of Equation \ref{eq:classicaldecomposition} are bounded as follows:
\begin{dgroup}
\begin{dmath}
T\mathbb{E}\mu^*_\infty -
\mathbb{E}_{\sigma^{\textsc{esbas}}}\left[ \sum_{\tau=1}^T\mathbb{E}\mu_{sub^*\left(\mathcal{D}_{\tau-1}\right)}^* \right]
\leq 
T\mathbb{E}\mu^*_\infty -
\mathbb{E}_{\sigma^*}\left[\sum_{\tau=1}^T\mathbb{E}\mu_{\mathcal{D}_{\big\lfloor\frac{\tau-1}{3K}\big\rfloor}^{\sigma^*}}^\alpha\right] +
\sum_{\tau=1}^T\delta_{\tau}\left(\mathbb{E}\mu^*_\infty -\cfrac{R_{min}}{1-\gamma}\right),
\end{dmath}
\begin{dmath}
T\mathbb{E}\mu^*_\infty -
\mathbb{E}_{\sigma^{\textsc{esbas}}}\left[ \sum_{\tau=1}^T\mathbb{E}\mu_{sub^*\left(\mathcal{D}_{\tau-1}\right)}^* \right]
\leq
\cfrac{T}{\lfloor\frac{T}{3K}\rfloor} \overline{\rho}^{\sigma^*}_{abs}\left(\Big\lfloor\frac{T}{3K}\Big\rfloor\right) + \left(\mathbb{E}\mu^*_\infty -\cfrac{R_{min}}{1-\gamma}\right)\sum_{\tau=1}^T \delta_\tau.
\end{dmath}
\end{dgroup}

Again, for $T\geq 9K^2$:
\begin{dmath}
T\mathbb{E}\mu^*_\infty -
\mathbb{E}_{\sigma^{\textsc{esbas}}}\left[ \sum_{\tau=1}^T\mathbb{E}\mu_{sub^*\left(\mathcal{D}_{\tau-1}\right)}^* \right] 
\leq
(3K+1) \overline{\rho}^{\sigma^*}_{abs}\left(\frac{T}{3K}\right) + \left(\mathbb{E}\mu^*_\infty -\cfrac{R_{min}}{1-\gamma}\right)\sum_{\tau=1}^T \delta_\tau.
\label{eq:classicaldecompositionterm1-result}
\end{dmath}

Regarding the first term in the second line of Equation \ref{eq:classicaldecomposition}, from applying recursively Assumption \ref{ass:compatibility}:
\begin{dgroup}
\begin{dmath}
\mathbb{E}_{\sigma^{\textsc{esbas}}}\left[\mathbb{E}\mu_{sub^*\left(\mathcal{D}_{\tau}\right)}^{*}\right] \leq \mathbb{E}_{\sigma^{\textsc{esbas}}}\left[\mathbb{E}\mu_{\mathcal{D}_{\tau}}^{*}\right],
\end{dmath}
\begin{dmath}
\mathbb{E}_{\sigma^{\textsc{esbas}}}\left[\mathbb{E}\mu_{sub^*\left(\mathcal{D}_{\tau}\right)}^{*}\right] \leq \mathbb{E}_{\sigma^{\textsc{esbas}}}\left[\max_{\alpha\in\mathcal{P}}\mathbb{E}\mu_{\mathcal{D}_{\tau}}^{\alpha}\right].
\end{dmath}
\end{dgroup}

From this observation, one directly concludes the following inequality:
\begin{dgroup}
\begin{dmath}
\mathbb{E}_{\sigma^{\textsc{esbas}}}\left[\sum_{\tau=1}^T \mathbb{E}\mu_{sub^*\left(\mathcal{D}_{\tau}\right)}^* \right] -
\mathbb{E}_{\sigma^{\textsc{esbas}}}\left[\sum_{\tau=1}^T \mathbb{E}\mu_{\mathcal{D}_{\tau}}^{k_\tau} \right] \leq 
\mathbb{E}_{\sigma^{\textsc{esbas}}}\left[\sum_{\tau=1}^T \max_{\alpha\in\mathcal{P}}\mathbb{E}\mu_{\mathcal{D}_{\tau}}^{\alpha} - \sum_{\tau=1}^T \mathbb{E}\mu_{\mathcal{D}_{\tau}}^{k_\tau} \right],
\end{dmath}
\begin{dmath}
\mathbb{E}_{\sigma^{\textsc{esbas}}}\left[\sum_{\tau=1}^T \mathbb{E}\mu_{sub^*\left(\mathcal{D}_{\tau}\right)}^* \right] -
\mathbb{E}_{\sigma^{\textsc{esbas}}}\left[\sum_{\tau=1}^T \mathbb{E}\mu_{\mathcal{D}_{\tau}}^{k_\tau} \right] \leq \overline{\rho}_{ss}^{\sigma^{\textsc{esbas}}}(T).
 \label{eq:classicaldecompositionterm2-result}
\end{dmath}
\end{dgroup}

Injecting results from Equations \ref{eq:classicaldecompositionterm1-result} and \ref{eq:classicaldecompositionterm2-result} into Equation \ref{eq:classicaldecomposition} provides the result:
\begin{dmath}
\overline{\rho}^{\sigma^{\textsc{esbas}}}_{abs}(T) \leq 
(3K+1) \overline{\rho}^{\sigma^*}_{abs}\left(\frac{T}{3K}\right) +
\overline{\rho}_{ss}^{\sigma^{\textsc{esbas}}}(T) + \left(\mathbb{E}\mu^*_\infty -\cfrac{R_{min}}{1-\gamma}\right)\sum_{\tau=1}^T \delta_\tau.
\end{dmath}

We recall here that the stochastic bandit algorithm $\Xi$ was assumed to guarantee to try the best algorithm $\alpha^*$ at least $N/K$ times with high probability $1-\delta_N$ and $\delta_N\in\mathcal{O}(N^{-1})$. Now, we show that at any time, the longest stochastic bandit run (\emph{i.e.} the epoch that experienced the biggest number of pulls) lasts at least $N=\frac{\tau}{3}$: at epoch $\beta_\tau$, the meta-time spent on epochs before $\beta_\tau-2$ is equal to $\sum_{\beta=0}^{\beta_\tau-2} 2^\beta = 2^{\beta_\tau-1}$; the meta-time spent on epoch $\beta_\tau-1$ is equal to $2^{\beta_\tau-1}$; the meta-time spent on epoch $\beta$ is either below $2^{\beta_\tau-1}$, in which case, the meta-time spent on epoch $\beta_\tau-1$ is higher than $\frac{\tau}{3}$, or the meta-time spent on epoch $\beta$ is over $2^{\beta_\tau-1}$ and therefore higher than $\frac{\tau}{3}$. Thus, ESBAS is guaranteed to try the best algorithm $\alpha^*$ at least $\tau/3K$ times with high probability $1-\delta_\tau$ and $\delta_\tau\in\mathcal{O}(\tau^{-1})$. As a result:
\begin{align}
\exists \kappa_3\hiderel{>}0, \:\:\:
\overline{\rho}^{\sigma^{\textsc{esbas}}}_{abs}(T) &\leq 
(3K+1) \overline{\rho}^{\sigma^*}_{abs}\left(\frac{T}{3K}\right) +
\overline{\rho}_{ss}^{\sigma^{\textsc{esbas}}}(T) + \left(\mathbb{E}\mu^*_\infty -\cfrac{R_{min}}{1-\gamma}\right)\sum_{\tau=1}^T \cfrac{\kappa_3}{\tau},\\
\exists \kappa\hiderel{>}0, \:\:\:
\overline{\rho}^{\sigma^{\textsc{esbas}}}_{abs}(T) &\leq 
(3K+1) \overline{\rho}^{\sigma^*}_{abs}\left(\frac{T}{3K}\right) +
\overline{\rho}_{ss}^{\sigma^{\textsc{esbas}}}(T) + \kappa\log(T),
\end{align}
with $\kappa = \kappa_3\left(\mathbb{E}\mu^*_\infty -\cfrac{R_{min}}{1-\gamma}\right)$, which proves the theorem.
\end{proof}

\end{document}